\def\prox{\operatorname{prox}}
\def\R{\mathbb{R}}
\def\N{\mathbb{N}}
\newtheorem{lemma}{Lemma}
\newtheorem{theorem}{Theorem}
\newtheorem{rem}{Remark}
\newtheorem{assumption}{Assumption}
\newtheorem{proposition}{Proposition}
\newtheorem{corollary}{Corollary}
\newtheorem{example}{Example}
\title{Delay-adaptive step-sizes for asynchronous learning}
\author{
  Xuyang Wu\\
  KTH Royal Institute of Technology\\
  Stockholm, SE-100 44\\
  \texttt{xuyangw@kth.se}\\
   \And
  Sindri Magn\'{u}sson\\
  Stockholm University\\
  Stockholm, SE-164 07\\
  \texttt{sindri.magnusson@dsv.su.se}\\
  \And
  Hamid Reza Feyzmahdavian\\
  ABB Corporate Research\\
   V\"{a}ster\r{a}s, SE-721 78\\
  \texttt{hamid.feyzmahdavian@se.abb.com}\\
  \And
  Mikael Johansson\\
  KTH Royal Institute of Technology\\
  Stockholm, SE-100 44\\
  \texttt{mikaelj@kth.se}\\
}
\begin{document}
\maketitle
\begin{abstract}
In scalable machine learning systems, model training is often parallelized over multiple nodes that run without tight synchronization. Most analysis results for the related asynchronous algorithms use an upper bound on the information delays in the system to determine learning rates. Not only are such bounds hard to obtain in advance, but they also result in unnecessarily slow convergence. In this paper, we show that it is possible to use learning rates that depend on the actual time-varying delays in the system. We develop general convergence results for delay-adaptive asynchronous iterations and specialize these to proximal incremental gradient descent and block-coordinate descent algorithms. For each of these methods, we demonstrate how delays can be measured on-line, present delay-adaptive step-size policies, and illustrate their theoretical and practical advantages over the state-of-the-art.
\end{abstract}

\section{Introduction}
	\label{submission}
	This paper considers step-sizes that adapt to the true delays in asynchronous algorithms for solving optimization problems in the form
	\begin{equation}\label{eq:generalprob}
		\min_{x\in\mathbb{R}^d}~P(x) = f(x)+R(x),
	\end{equation}
	where $f:\mathbb{R}^d\rightarrow\mathbb{R}$ is a smooth but possibly non-convex loss function and $R:\mathbb{R}^d\rightarrow \mathbb{R}\cup\{+\infty\}$ is a convex nonsmooth function. Here, $R$ is typically a regularizer, promoting desired solution properties such as sparsity, or the indicator function of a closed convex set (the constraint set for $x$).
	
	
	
	
	When either the data dimension (the number of samples defining $f$) or the variable dimension $d$ is large, we may need to distribute the optimization process over multiple compute nodes.
	In a distributed environment, synchronous algorithms such as gradient descent or block coordinate descent, are often inefficient. Since they need to wait for the slowest worker node to complete its task, the system tends to spend a significant time idle  and becomes sensitive to single node failures. This motivates the development of asynchronous algorithms which allow all nodes to run at their maximal capacity without synchronization overhead.

	In the past decade, numerous asynchronous algorithms have been proposed to solve large-scale problems on the form~\eqref{eq:generalprob}. Notable examples include ARock \cite{Peng16}, PIAG \cite{aytekin16,vanli2018global}, Async-BCD \cite{liu2014}, Hogwild! \cite{recht2011hogwild}, AsyFLEXA \cite{cannelli2016}, and DAve-RPG \cite{mishchenko2018delay}, to mention a few. Algorithms that use fixed step-sizes often assume bounded asynchrony and require an upper bound of the worst-case information delay to determine the step-size. However, such an upper bound is usually difficult to obtain in advance, and is a crude model for actual system delays. Indeed, actual latencies may be significantly smaller than the worst case for most nodes, and for most of the time.  This makes the algorithm hard to tune and inefficient to run, since a large worst-case delay leads to a small step-size and a slow iterate convergence.

	\subsection{Algorithms and related work}
	In this paper, we develop general principles and convergence results for asynchronous optimization algorithms that adjust the learning rate on-line to the actual information delays. We then present concrete delay-tracking algorithms and adaptive step-size policies for two specific asynchronous optimization algorithms, PIAG and Async-BCD. These algorithms address two distinct variations of distributed model training: distribution of data over samples (PIAG) and distribution across features (Async-BCD).
	To put our work in context, we review the related literature below.

	\textbf{PIAG}: PIAG solves problem \eqref{eq:generalprob} with aggregated loss $f(x)=\frac{1}{n}\sum_{i=1}^n f^{(i)}(x)$. Here, each $f^{(i)}$ could represent the training loss on sample $i$, on mini-batch $i$ or on the complete data set held by some worker node. The PIAG algorithm is often implemented in a parameter server architecture~\cite{LiM13}, where a master node updates the iterate $x_k$ based on the most recent gradient information from each worker. The new iterate is broadcast to idle workers, who proceed to compute the gradient of the training loss on their local data set, and return the gradient to the master node. Both master and worker nodes operate in an event-driven fashion without any global synchronization.
	
	Early work on PIAG \cite{blatt2007convergent,roux2012stochastic,gurbuzbalaban2017convergence} focused on smooth problems, \emph{i.e.}, let $R\equiv 0$ in~\eqref{eq:generalprob}. Extensions of PIAG that allow for a non-smooth regularizer include  \cite{aytekin16,vanli2018global,Feyzmahdavian21} for convex $f$ and \cite{Deng20,Sun19} for non-convex $f$. In addition, a recent work \cite{wai2020accelerating} compensates for the information delays in PIAG using Hessian information. However, all these papers use an upper bound of the worst-case delay to determine the step-size.

	\textbf{Async-BCD}: Async-BCD splits the whole variable $x$ into multiple blocks $\{x^{(i)}\}_{i=1}^m$ and solves problem \eqref{eq:generalprob} with separable nonsmooth function $R(x) = \sum_{i=1}^m R^{(i)}(x^{(i)})$. The algorithm is usually implemented in a shared memory architecture \cite{Peng16}, where the iterate is stored in shared memory and multiple servers asynchronously and continuously update one block at a time based on the delayed iterates they read from the shared memory.
	
	Existing work on Async-BCD includes \cite{liu2014,liu2015,Davis16,Sun17}, among which
	\cite{Sun17} consider smooth problems  ($R^{(i)}\equiv 0$), 
	\cite{liu2014} requires $R^{(i)}$ to be an indicator function, and \cite{liu2015,Davis16} allow for general convex $R^{(i)}$. In addition, some asynchronous methods use updates that are similar to Async-BCD, such as ARock \cite{Peng16,hannah2018unbounded,Feyzmahdavian21} and AsyFLEXA \cite{cannelli2016}. All these papers except \cite{hannah2018unbounded} consider fixed step-sizes tuned based on a uniform upper bound of the delays, while \cite{hannah2018unbounded} assumes stochastic delays and suggests a step-size that relies on the distribution parameters of the delays (quantities that are often unknown).

	\subsection{Contributions}
	
	\replaced{This paper introduces the concept of delay-adaptive step-sizes for asynchronous optimization algorithms. We demonstrate how information delays can be accurately recorded on-line, introduce a family of dynamic step-size policies that adapt to the true amount of asynchrony in the system, and give a formal proof for convergence under all bounded delays. This eliminates the need to know an upper bound of the delays to set the learning rate and the removes the (typically significant) performance penalty that occurs when this upper bound is larger than the true system delays. We make the following specific contributions:}{
	Getting rid of the uniform delay upper bound in the step-size makes asynchronous algorithms practically implementable. Moreover, by using true delays rather than its uniform upper bound, larger step-size and possible faster convergence may be obtained. We track the true delays in PIAG and Async-BCD and derive delay-adaptive step-sizes that use the true delays. Our contribution can be summarized as}
	\begin{itemize}
		\item We develop simple and practical delay tracking algorithms for PIAG in the parameter server and for Async-BCD in shared memory. 
        \item We derive a novel convergence result that simplifies the analysis of broad classes of asynchronous optimization algorithms, and allows to analyze the effect of a time-varying and delay-dependent learning rate.
        
        \item We demonstrate how a natural extension of the fixed step-sizes proposed for asynchronous optimization to the delay-adaptive setting fails, and suggest a general step-size principle that ensures convergence under all bounded delays, even if their upper bound is unknown. 

		\item Under the step-size principle, we design two delay-adaptive step-size policies that use the true delay. We derive explicit convergence rate guarantees for PIAG and Asynch-BCD under these step-size policies, compare these with the state-of-the art, and identify scenarios where our new step-sizes give large speed-ups. 
	\end{itemize}
	Experiments on a classification problem show that the proposed delay-adaptive step-sizes accelerate the convergence of the two methods compared to the best known fixed step-sizes from the literature. 
	\subsection*{Notation and Preliminaries} We use $\mathbb{N}$ and $\mathbb{N}_0$ to denote the set of natural numbers and the set of natural numbers including zero, respectively. We let $[m] = \{1,\ldots,m\}$ for any $m\in\mathbb{N}$ and define the proximal operator of a function $R:\mathbb{R}^d\rightarrow\mathbb{R}\cup\{+\infty\}$ as
	\begin{equation*}
		\operatorname{prox}_{R}(x) = \underset{y\in\mathbb{R}^d}{\operatorname{\arg\!\min}} ~R(y)+\frac{1}{2}\|y-x\|^2.
	\end{equation*}
	We say a function $f:\R^d\rightarrow \R$ is $L$-smooth if it is differentiable and
	\begin{equation*}
		\|\nabla f(x)-\nabla f(y)\|\le L\|x-y\|,~\forall x,y\in\mathbb{R}^d.
	\end{equation*}
	For $L$-smooth function $f:\R^d\rightarrow \R$ and convex function $R:\R^d\rightarrow \R\cup\{+\infty\}$, we say that $P(x)=f(x)+R(x)$ satisfies the proximal PL condition \cite{karimi16} with some $\sigma>0$ if
	\begin{equation}\label{eq:proximalPLcond}
		\sigma(P(x)-P^\star)\le -L\hat{P}(x),~\forall x\in\operatorname{dom}(P),
	\end{equation}
	where $\hat{P}(x)=\min_{y\in\mathbb{R}^d}~\{\langle \nabla f(x), y-x\rangle+\frac{L}{2}\|y-x\|^2+R(y)-R(x)\}$ and $P^\star=\min_{x\in\mathbb{R}^d} P(x)$.
	\section{Algorithms with delay-tracking}
	
	In this section, we first introduce the PIAG and Async-BCD algorithms and demonstrate how they can record actual system delays with almost no overhead. The key to this observation is that delays in asynchronous algorithms are typically not measured in physical time, but rather in the number of write events that have occurred since the model parameters that are used in the update were computed~(see, \emph{e.g.},~\cite{leblond:18}). Hence, in the parameter server architecture and the shared memory systems, delays can often be computed accurately without any intricate time synchronization between distributed nodes. We then demonstrate how the natural extension of the state-of-the-art step-size rules for worst-case delays fails to extend to time-varying delays.

	
	\subsection{PIAG in a parameter server architecture}\label{ssec:PIAG}

	PIAG solves problem \eqref{eq:generalprob} with aggregated loss $f(x)=\frac{1}{n}\sum_{i=1}^n f^{(i)}(x)$ and takes the following form:
	\begin{align}
		& g_k = \frac{1}{n}\sum_{i=1}^n \nabla f^{(i)}(x_{k-\tau_k^{(i)}}),\label{eq:gkdef}\\
		& x_{k+1} = \prox_{\gamma_k R}(x_k-\gamma_k g_k)\label{eq:update},
	\end{align}
	where $\tau_k^{(i)}\in [0,k]$ is the delay of $\nabla f^{(i)}$ at the $k$th iteration.

	\textbf{Parameter server}: PIAG is usually implemented in a parameter server framework \cite{LiM13} with one master and $n$ workers, each one capable of computing (stochastic, mini-batch, or full) gradients of a specific $f^{(i)}$. The master maintains the most recent iterate $x_k$ and the most recently received gradients $g^{(i)}=\nabla f^{(i)}(x_{k-\tau_k^{(i)}})$ from each worker. Once the master receives new gradients, it revises the corresponding $g^{(i)}$, updates the iterate, and pushes the new parameters back to idle workers. A detailed implementation of PIAG \eqref{eq:gkdef} -- \eqref{eq:update} in the parameter server setting without delay-tracking is presented in \cite{aytekin16}.
	
	
	\textbf{Delay-tracking}: To compute the delays $\tau_k^{(i)}$, the PIAG algorithm needs to know the iteration index of the model parameters used to compute each $g^{(i)}$. In Algorithm~\ref{alg:PIAG}, we maintain this information using a simple time-stamping procedure. Specifically, in iteration $l$, the master pushes the tuple $(x_{l}, l)$ to idle workers. Workers return $(\nabla f^{(i)}(x_{l}), l)$ which the master stores as $g^{(i)}\leftarrow \nabla f^{(i)}(x_{l})$, $s^{(i)}\leftarrow l$. At any iteration $k\geq l$, the delay $\tau_k^{(i)}$ is then given by $k-s^{(i)}$. 

	The tracking scheme in Algorithm \ref{alg:PIAG} can be extended to other approaches that can also be implemented in the parameter server setting, such as Asynchronous SGD \cite{dean2012large,zhang2020taming}.

	\begin{algorithm}[tb]
		\caption{PIAG with delay-tracking}
		\label{alg:PIAG}
		\begin{algorithmic}[1]
			\STATE {\bfseries Input}: initial iterate $x_0$, number of iteration $k_{\max}\in\N$.
			\STATE {\bfseries Initialization:}
			\STATE The master sets $k\leftarrow0$, $g^{(i)}\leftarrow\nabla f^{(i)}(x_0)$ $\forall i\in [n]$, and $g_0\leftarrow\frac{1}{n}\sum_{i=1}^n \nabla f^{(i)}(x_0)$.
			\WHILE{$k\le  k_{\max}$: each worker $i\in [n]$ \emph{asynchronously} and \emph{continuously}}
			\STATE receive $(x_k, k)$ from the master.
			\STATE compute $\nabla f^{(i)}(x_k)$.
			\STATE send $(\nabla f^{(i)}(x_k), k)$ to the master.
			\ENDWHILE
			\WHILE{$k\le  k_{\max}$: the master}
			\STATE Wait until a set ${\mathcal R}$ of workers return.
			\FOR{all $w\in {\mathcal R}$}
			\STATE update $(g^{(w)}, s^{(w)}) \leftarrow  (\nabla f^{(w)}(x_l), l)$.
			\ENDFOR
			\STATE set $g_k \leftarrow \frac{1}{n}\sum_{i=1}^n g^{(i)}$.
			\STATE calculate the delay $\tau_k^{(i)}=k-s^{(i)}$ $\forall i\in [n]$.
			\STATE determine the step-size $\gamma_k$ based on $\tau_k^{(i)}$ $\forall i\in [n]$.
			\STATE update $x_{k+1}\leftarrow\operatorname{prox}_{\gamma_kR}(x_k-\gamma_k g_k)$.
			\STATE set $k\leftarrow k+1$.
			\FOR{all $w\in {\mathcal R}$}
			\STATE send $(x_{k}, k)$ to worker $w$.
			\ENDFOR
			\ENDWHILE
		\end{algorithmic}
	\end{algorithm}

	\subsection{Async-BCD in the shared memory setting}\label{ssec:ARock}
	
	Block-coordinate descent, BCD,~\cite{hong2017iteration} can be a powerful alternative for solving (\ref{eq:generalprob}) when the regularizer is separable. 
	Assume that $R(x) =  \sum_{i=1}^m R^{(i)}(x^{(i)})$ where $x = (x^{(1)}, \ldots, x^{(m)})$, $x^{(i)}\in\R^{d_i}$ and $\sum_{i=1}^m d_i = d$. At each iteration of BCD, a random $j\in [m]$ is drawn and
%
%
	\begin{equation*}
		\!x_{k+1}^{(j)} \!\!=\! \prox_{\gamma_k R^{(j)}}\!(x_k^{(j)} \!-\gamma_k\nabla_j f(x_k)).
	\end{equation*}
	Async-BCD parallelizes this update over $n$ workers in a shared memory architecture~\cite{Peng16}. Workers operate without synchronization, repeatedly read the current iterate from shared memory, and update a randomly chosen block.
	More specifically, suppose that at time $k$, worker $i_k$ updates the $j$th block $x_k^{(j)}$ based on the partial gradient $\nabla_j f$ at $\hat{x}_k$, where $\hat{x}_k$ is what the server $i_k$ read from the shared memory. Then, the $k$th update is
	\begin{equation}\label{eq:asyncor}
		\!x_{k+1}^{(j)} \!\!=\! \prox_{\gamma_k R^{(j)}}\!(x_{k}^{(j)} \!-\gamma_k\nabla_j f(\hat{x}_k)).
	\end{equation}
	A specific aspect of Async-BCD is that while $i_k$ reads from the shared memory, other workers may be in the process of writing. Hence, $\hat{x}_k$ itself may never have existed in the shared memory. This phenomenon is known as \emph{inconsistent read}~\cite{liu2015}. 
%
    However, if we assume that each (block) write is atomic, then we can express $x_k$ as
	\begin{equation}\label{eq:inconsistentreadhatx}
		x_k = \hat{x}_k + \sum_{j\in J_k} (x_{j+1}-x_j).
	\end{equation}
	where $J_k\subseteq \{0, 1, \dots, k\}$. The sum represents all updates that have occurred since $i_k$ began reading $\hat{x}_k$ until the block update is written back to memory.
	We call $\tau_k=k-\min\{j:j\in J_k\}$ the delay of $\hat{x}_k$ at iteration $k$.
	\textbf{Delay-tracking:} To track the delays in Async-BCD, workers need to record the value of the iterate counter when they begin reading from shared memory, and then again when they begin writing back their result. When worker $i$ begins to read $x$ from the shared memory in Algorithm~\ref{alg:Coor}, 
	it stores the current value of the iterate counter into a local variable $s^{(i)}$. In this way, it can compute the delay $\tau_k=k-s^{(i)}$ when it is time to write back the result at iteration $k$.  
%
We assume that during steps 5-9, worker $i_k$ is the only one that updates the shared memory. This is a little more restrictive than standard Async-BCD that only assumes that the write operation on Line~8 is atomic, but is needed to make sure that $\gamma_k$ calculated in step~6 is used in~\eqref{eq:asyncor} to update $x_{k+1}^{(j)}$.
	
	The tracking technique in Algorithm \ref{alg:Coor} is applicable to many other methods for shared memory systems, such as ARock \cite{Peng16}, Hogwild! \cite{recht2011hogwild}, and AsyFLEXA \cite{cannelli2016}. 

	\begin{algorithm}[tb]
		\caption{Async-BCD with delay tracking}
		\label{alg:Coor}
		\begin{algorithmic}[1]
			\STATE {\bfseries Setup:} initial iterate $x_0$, number of iteration $k_{\max}\in\N$.
			\WHILE{$k\le  k_{\max}$: each worker $i\in [n]$ \emph{asynchronously} and \emph{continuously}}
			\STATE sample $j\in [m]$ uniformly at random.
			\STATE compute $\nabla_j f(\hat{x}_k)$ based on $\hat{x}_k$ read at time $s^{(i)}$.
			\STATE calculate $\tau_k=k-s^{(i)}$.
			\STATE determine the step-size $\gamma_k$.
			\STATE compute $x_{k+1}^{(j)}$ by \eqref{eq:asyncor}.
			\STATE write on the shared memory.
			\STATE set $k\leftarrow k+1$.
			\STATE set $s^{(i)}=k$ and read $x_k$ from the shared memory.
			\ENDWHILE
		\end{algorithmic}
	\end{algorithm}

	\subsection{Intuitive extension of a fixed step-size fails}
	
	Several of the least conservative results for PIAG~\cite{Sun19,Deng20,Feyzmahdavian21} and Async-BCD~\cite{Davis16,Sun17} use step-sizes on the form $\gamma_k = \frac{c}{\tau+b}$ where $b$ and $c$ are positive constants (independent of the delays) and $\tau$ is the maximal delay. A natural candidate for a delay-adaptive step-size would be one where the upper delay bound is replaced by the true system delay, \emph{i.e.} 
	\begin{equation}\label{eq:simpleextensionfix}
		\gamma_k = \frac{c}{\tau_k+b}.
	\end{equation}
	However, as the next example demonstrates, this step-size can lead to divergence even for simple problems.
	
	\begin{example}\label{exam:counterexample}
		Consider problem \eqref{eq:generalprob} with $n=d=1$, $f(x)=\frac{1}{2}x^2$, and $R(x)=0$. Suppose that $\tau_k=k \operatorname{mod} T$ for all $k\in\mathbb{N}_0$ for some $T >\!b(e^{2/c}-1)$. Then, the delays are bounded by $T\!-\!1$ and both PIAG and Async-BCD update as
		\begin{equation*}
			x_{k+1} = x_k - \gamma_k \nabla f(x_{k-\tau_k}) = x_k - \gamma_k x_{T\cdot \lfloor k/T \rfloor},
		\end{equation*}
		so that $x_{(k+1)T}=(1-\sum_{t=0}^{T-1} \gamma_{kT+t})x_{kT}$. Then, $\{x_{kT}\}$ diverges if $\sum_{t=0}^{T-1} \gamma_{kT+t}>2$, which is indeed true by \eqref{eq:simpleextensionfix}:
		\begin{equation*}
			\begin{split}
				\sum_{t=0}^{T-1} \gamma_{kT+t} \ge \sum_{t=0}^{T-1} \frac{c}{t+b} \ge \int_b^{T+b} \frac{c}{s}~ds = c\ln \frac{T+b}{b}>2.
			\end{split}
		\end{equation*}
	\end{example}
	
	\added{However, as we will demonstrate next, convergence can be guaranteed under a slightly more advanced step-size policy. }
    	
	\section{Delay-adaptive step-size}\label{sec:stepsize}
	
\replaced{In this section, we prove that both PIAG and Async-BCD converge under step-size policies that satisfy}{
 In this section, we illustrate the main contribution of the paper: how can we adapt step-sizes to the delay in run-time?  In particular, we prove that both PIAG and Async-BCD converge with any step-size $\sum_{t=0}^{\infty} \gamma_t = \infty$ if}
 	\begin{equation}\label{eq:commonstepsizerule}
		0\le \gamma_k \le \max(0,\gamma'- \sum_{t=k-\tau_k}^{k-1} \gamma_t)
	\end{equation}
	provided that also $\sum_{t=0}^{\infty} \gamma_t = +\infty$. Here, the constant $\gamma^{\prime}$ only depends on loss function properties, and there is no need to know the maximal value of the system delay to tune, run, or certify the system. To the best of our knowledge, this is the first such result in the literature.
	
	The convergence analysis is based on a new sequence result for asynchronous iterations, that could be applicable to many problems beyond the scope of this paper. We provide convergence results for PIAG and Async-BCD for several classes of problems in sections~\ref{ssec:PIAGconvergence} and~\ref{ssec:asyncbcdconverge}, respectively. In Section~\ref{ssec:adaptivestep}, we introduce a few specific step-size policies that satisfy the general principle (\ref{eq:commonstepsizerule}) and demonstrate how they extend and improve existing fixed step-sizes both in theory and practice. 

	\subsection{Novel sequence result for delay-adaptive sequences}
	\label{ssec:sequence}


	
	
    Lyapunov theory, and related sequence results, are the basis for the convergence analysis of many optimization algorithms~\cite{polyak1987introduction}. Asynchronous algorithms are no different~\cite{aytekin16,Peng16,Davis16,bertsekas1989convergence}. Several convergence results for asynchronous algorithms are unified and generalized in a recent work \cite{Feyzmahdavian21}.
	%
However, previous work has focused on only scenarios where the maximum delay is known, and existing results cannot be used to analyse delay-adaptive step-sizes like \eqref{eq:commonstepsizerule}. The following theorem generalizes these results to allow adaption to the actual observed delay.  
	\begin{theorem}\label{theo:sequence}
		Suppose that the non-negative sequences $\{V_k\}$, $\{X_k\}$, $\{W_k\}$, $\{p_k\}$, $\{r_k\}$, and $\{q_k\}$ satisfy
		\begin{equation}\label{eq:asynseq}
			X_{k+1}+V_{k+1} \le q_kV_k+p_k\sum_{\ell = k-\tau_k}^{k-1} W_\ell - r_kW_k
		\end{equation}
		for all $k\in\N_0$, where $q_k\in (0,1]$ and $\tau_k\in [0,k]$. Let $Q_k = \Pi_{j=0}^{k-1} q_j$, $k\in\mathbb{N}_0$. If for all $k\in\mathbb{N}_0$, either $p_k=0$ or
		\begin{equation}\label{eq:generalprincipal}
			\frac{p_k}{Q_{k+1}}\le \frac{r_\ell}{Q_{\ell+1}}-\sum_{t=\ell+1}^{k-1} \frac{p_t}{Q_{t+1}},\quad \forall \ell \in [k-\tau_k, k],
		\end{equation}
		then
		\begin{equation}\label{eq:Vconverge}
			V_k\le Q_kV_0,~\forall k\in\mathbb{N}
		\end{equation}
		and
		\begin{equation}\label{eq:summableX}
			\sum_{k=1}^\infty \frac{X_k}{Q_k}\le V_0.
		\end{equation}
	\end{theorem}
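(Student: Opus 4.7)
The plan is to reduce Theorem~\ref{theo:sequence} to a clean telescoping argument by first normalizing the recursion, then exchanging the order of summation, and finally using condition~\eqref{eq:generalprincipal} to cancel the memory terms against the decrement terms. Specifically, I would divide \eqref{eq:asynseq} through by $Q_{k+1} = q_k Q_k$ and introduce the normalized quantities $\tilde V_k = V_k/Q_k$, $\tilde X_k = X_k/Q_k$, $\tilde p_k = p_k/Q_{k+1}$, and $\tilde r_k = r_k/Q_{k+1}$. The hypothesis then becomes
$$\tilde X_{k+1} + \tilde V_{k+1} \le \tilde V_k + \tilde p_k \sum_{\ell=k-\tau_k}^{k-1} W_\ell - \tilde r_k W_k,$$
and \eqref{eq:generalprincipal} reads: either $\tilde p_k = 0$ or $\sum_{t=\ell+1}^{k} \tilde p_t \le \tilde r_\ell$ for every $\ell \in [k-\tau_k, k]$. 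This normalization is convenient because both the conclusion $V_k \le Q_k V_0$ and the weighted summability \eqref{eq:summableX} become the single statement that $\sum_{k=1}^{N}\tilde X_k + \tilde V_N$ is uniformly bounded by $V_0$.

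Next, I would sum the normalized recursion from $k=0$ to $k=N-1$, telescoping the $\tilde V_k$ terms to obtain
$$\sum_{k=1}^{N} \tilde X_k + \tilde V_N \le V_0 + \sum_{k=0}^{N-1} \tilde p_k \sum_{\ell=k-\tau_k}^{k-1} W_\ell - \sum_{k=0}^{N-1} \tilde r_k W_k.$$
Swapping the order of summation in the double sum regroups the coefficients of $W_\ell$ as $\sum_{\ell=0}^{N-2} W_\ell \bigl(\sum_{k \in K_N(\ell)} \tilde p_k\bigr)$, where $K_N(\ell) := \{k : \ell+1 \le k \le N-1,~k-\tau_k \le \ell\}$ collects the indices whose look-back window reaches $\ell$. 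The entire proof then reduces to establishing the \emph{cone-sum bound}
$$\sum_{k \in K_N(\ell)} \tilde p_k \le \tilde r_\ell \qquad \text{for every } \ell \in \{0, \ldots, N-2\},$$
since with this bound in hand the memory contribution is dominated by $\sum_\ell \tilde r_\ell W_\ell$, the $W_\ell$-terms cancel, and the inequality $\sum_{k=1}^N \tilde X_k + \tilde V_N \le V_0$ delivers both \eqref{eq:Vconverge} and (letting $N \to \infty$) \eqref{eq:summableX}.

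I expect the cone-sum bound to be the main obstacle, because condition~\eqref{eq:generalprincipal} is stated for individual pairs $(k, \ell)$ rather than for the whole set $K_N(\ell)$, and the case $p_k = 0$ has to be handled separately. My plan is to fix $\ell$ and, if every index in $K_N(\ell)$ has $\tilde p_k = 0$, the bound is trivial (since $\tilde r_\ell \ge 0$). Otherwise, I would select $k^\star := \max\{k \in K_N(\ell) : \tilde p_k > 0\}$. Because $\tilde p_{k^\star} > 0$ and $\ell \in [k^\star - \tau_{k^\star}, k^\star - 1] \subseteq [k^\star - \tau_{k^\star}, k^\star]$, the non-trivial branch of \eqref{eq:generalprincipal} applied at $k = k^\star$ yields $\sum_{t=\ell+1}^{k^\star} \tilde p_t \le \tilde r_\ell$. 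By the maximality of $k^\star$, every index in $K_N(\ell)$ carrying positive weight lies in $\{\ell+1, \ldots, k^\star\}$, so non-negativity of the $\tilde p_t$ gives $\sum_{k \in K_N(\ell)} \tilde p_k \le \sum_{t=\ell+1}^{k^\star} \tilde p_t \le \tilde r_\ell$. The subtlety is precisely that one must pick $k^\star$ as the last index in $K_N(\ell)$ with $\tilde p_{k^\star} > 0$, rather than simply the last index in $K_N(\ell)$; once this is done the remaining steps are routine bookkeeping.
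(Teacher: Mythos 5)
Your proposal is correct and follows essentially the same route as the paper's proof: normalize by $Q_{k+1}$, sum and telescope, swap the order of summation to collect coefficients of each $W_\ell$, and then bound the resulting cone sum by applying condition~\eqref{eq:generalprincipal} at the \emph{largest} index $k^\star$ in the cone with $p_{k^\star}>0$ (the paper uses the same device, calling the set $\mathcal{T}_\ell$ and the chosen index $k'$). The only cosmetic difference is that you restrict your index set $K_N(\ell)$ to $\{\ell+1,\dots,N-1\}$ from the outset, whereas the paper's $\mathcal{T}_\ell$ is unrestricted and the finiteness is implicit; your version is a little tidier on that minor point.
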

	\begin{proof}
		See Appendix \ref{proof:theoseq}.
	\end{proof}
	The theorem is a tool for establishing the convergence and convergence rate of $X_k$ and $V_k$.
	The condition \eqref{eq:asynseq} is quite general, so the result\deleted{s} may be useful \replaced{for}{to study} many \deleted{other} methods beyond PIAG and Async-BCD \replaced{that we focus on in this paper.}{, though PIAG and Async-BCD are our main focus. }
	
	\replaced{The theorem can be used to establish a linear convergence rate of algorithms. In particular, if $q_k\le q$ for all $k\in\mathbb{N}_0$ and some $q\in(0,1)$ then~\eqref{eq:Vconverge}--\eqref{eq:summableX} imply the linear rates
	\begin{equation*}
		V_k\le q^kV_0,~~X_k\le q^kV_0.
	\end{equation*}
}{The theorem can be used to establish a linear convergence rate of algorithms. In particular, if $q_k\le q$ for all $k\in\mathbb{N}_0$ and some $q\in(0,1)$ then~\eqref{eq:Vconverge} implies the linear rate
	\begin{equation}\label{eq:Vkfixedq}
		V_k\le q^kV_0
	\end{equation}
	and \eqref{eq:summableX} ensures a linear convergence of $X_k$ since
	\begin{equation}\label{eq:Xkfixedq}
		\lim_{k\rightarrow\infty} \frac{X_k}{q^k} = 0.
	\end{equation}}
	

	If we can only say that $q_k\leq 1$, 
	then \eqref{eq:summableX} yields that
	\begin{equation*}
		\sum_{k=1}^\infty X_{k} < +\infty,
	\end{equation*}
	from which we conclude that 
	$\lim_{k\rightarrow \infty} X_k = 0$. 

\subsection{Convergence of PIAG under principle \eqref{eq:commonstepsizerule}}\label{ssec:PIAGconvergence}

\replaced{With the help of Theorem~\ref{theo:sequence}, we are able to establish the following convergence guarantees for PIAG under the general step-size principle (\ref{eq:commonstepsizerule}). 
}{
	
We now illustrate how Theorem \ref{theo:sequence} is used to establish the convergence of PIAG with adaptive step-sizes for non-convex, convex, and {\color{red}PL?} strongly convex optimization problems.
}
The main proof idea is to show that some quantities generated by PIAG satisfy the equation \eqref{eq:asynseq} when \eqref{eq:commonstepsizerule} holds (see Lemma \ref{lemma:PIAGsequence} in the Appendix).
	\begin{theorem}\label{theo:nonconvexPIAG}
	Suppose that each  $f^{(i)}$ is $L_i$-smooth, $R$ is convex and closed, and $P^\star:=\min_x P(x)>-\infty$. Define $L=\sqrt{({1}/{n})\sum_{i=1}^n L_i^2}.$
    \replaced{Let $\{x_k\}$ be generated by the PIAG algorithm with a step-size sequence $\{\gamma_t\}$ that satisfies (\ref{eq:commonstepsizerule})}{ 
	If we apply the adaptive step-size principle~\eqref{eq:commonstepsizerule}}  with $\gamma' = {h}/{L}$ for some $h\in (0,1)$\replaced{. Then,}{then, following holds:}
		\begin{enumerate}[(1)]
			\item \deleted{\textbf{Non-convex:}} For each $k\in\mathbb{N}_0$, there exists $\xi_k\in \partial R(x_k)$ such that
			\begin{equation*}
				\!\!\!\sum_{k=1}^\infty\!\gamma_{k-1}\|\nabla f(x_k)\!+\!\xi_k\|^2\!\le\!\frac{2(h^2\!-\!h\!+\!1)(P(x_0)\!-\!P^\star)}{1-h}.
			\end{equation*}
			\item \deleted{\textbf{Convex:}} If each $f^{(i)}$ is convex, then
			\begin{equation*}
				P(x_k)-P^\star\le \frac{P(x_0)-P^\star+\frac{1}{2a_0}\|x_0-x^\star\|^2}{1+\frac{1}{a_0}\sum_{t=0}^{k-1}\gamma_t},
			\end{equation*}
			where $a_0=\frac{h(h+1)}{L(1-h)}$.
			\item \deleted{\textbf{PL:}} If $P$ satisfies the proximal PL-condition \eqref{eq:proximalPLcond}, then
			\begin{equation*}
				P(x_k)\!-\!P(x^\star)\!\le\! e^{-\frac{3c\sigma(1-\tilde{h})}{4(\tilde{h}^2-\tilde{h}+1)}\sum_{t=0}^{k-1}\gamma_t}(P(x_0)\!-\!P^\star),
			\end{equation*}
			where $\tilde{h} = \frac{1+h}{2}$ and $c = \min\left(1,\frac{1-h}{2h}\frac{L}{\sigma}\right)$.
		\end{enumerate}
	\end{theorem}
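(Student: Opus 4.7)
My plan is to reduce each of the three claims to an application of the sequence result Theorem~\ref{theo:sequence}, after first deriving an appropriate one-step descent inequality for the PIAG iteration and then matching the step-size principle (\ref{eq:commonstepsizerule}) to the sequence condition (\ref{eq:generalprincipal}).

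\textbf{Step 1 (one-step inequality).} I would first establish an inequality of the form
\[
V_{k+1} + X_{k+1} \le q_k V_k + p_k \sum_{\ell=k-\tau_k}^{k-1} W_\ell - r_k W_k,
\]
where $p_k$ and $r_k$ are proportional to $\gamma_k$ with explicit constants depending on $h$ and $L$. The key ingredients are: (i) the descent lemma for the $L$-smooth function $f$ applied to $f(x_{k+1})$; (ii) the optimality of the proximal update, which produces $\xi_{k+1}\in \partial R(x_{k+1})$ satisfying $\nabla f(x_k)+\xi_{k+1}+(x_{k+1}-x_k)/\gamma_k+(g_k-\nabla f(x_k))=0$; (iii) the Jensen-type bound $\|g_k-\nabla f(x_k)\|^2\le \frac{1}{n}\sum_i L_i^2 \|x_k-x_{k-\tau_k^{(i)}}\|^2 = L^2\cdot \frac{1}{n}\sum_i \|x_k-x_{k-\tau_k^{(i)}}\|^2$, which is exactly where the constant $L=\sqrt{(1/n)\sum_i L_i^2}$ enters; and (iv) telescoping $\|x_k-x_{k-\tau_k^{(i)}}\|\le \sum_{\ell=k-\tau_k^{(i)}}^{k-1}\|x_{\ell+1}-x_\ell\|$ followed by Cauchy--Schwarz, so that $W_\ell=\|x_{\ell+1}-x_\ell\|^2/\gamma_\ell$ is a natural choice. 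This is presumably what Lemma~\ref{lemma:PIAGsequence} (referenced as being in the appendix) accomplishes.

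\textbf{Step 2 (matching the step-size principle).} With this $W_\ell$, the ratios $p_k/r_\ell$ in Theorem~\ref{theo:sequence} reduce, after cancellation, to linear combinations of $\gamma_k$ and $\gamma_\ell$. I would then verify that the hypothesis $\gamma_k\le h/L-\sum_{t=k-\tau_k}^{k-1}\gamma_t$ is exactly what is needed for the condition (\ref{eq:generalprincipal}) to hold, once the Young-inequality splitting parameter is tuned so that the coefficient of the descent term is $(1-h)/2$ and the coefficient of the error term is $h/(2L)$.

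\textbf{Step 3 (the three cases).} For the non-convex part, take $V_k=P(x_k)-P^\star$ and $q_k\equiv 1$, apply (\ref{eq:summableX}), and collect constants; the factor $2(h^2-h+1)/(1-h)$ comes from the bookkeeping around $\|\nabla f(x_k)+\xi_k\|^2$ versus $\|x_{k+1}-x_k\|^2/\gamma_k^2$. For the convex part, I would augment $V_k$ with $\tfrac{1}{2}\|x_k-x^\star\|^2$ and exploit convexity via $\langle \nabla f(x_k),x_k-x^\star\rangle\ge f(x_k)-f(x^\star)$; then, combining $V_k\le V_0$ from (\ref{eq:Vconverge}) with a telescoping sum of the gap $P(x_t)-P^\star$ weighted by $\gamma_t$ and monotonicity of $P(x_k)$ yields the $O(1/\sum_t \gamma_t)$ rate. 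For the PL part, I would use $\hat P(x_k)\le -\sigma(P(x_k)-P^\star)/L$ to convert a fraction $c$ of the progress term $-r_k W_k$ into a multiplicative contraction, giving $q_k\le 1-c_1\gamma_k$ and hence $V_k\le \prod_{t=0}^{k-1}(1-c_1\gamma_t)V_0\le \exp(-c_1\sum_{t=0}^{k-1}\gamma_t)V_0$.

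\textbf{Main obstacle.} The delicate part is Step 2 in the PL case, because once $q_k<1$ the weights $1/Q_{t+1}$ appearing in (\ref{eq:generalprincipal}) are no longer constant, and the clean identity between the step-size rule and the sequence condition is lost. To restore it, I would replace the original tolerance $h$ by a slightly tighter $\tilde h=(1+h)/2$, which creates a margin absorbing the $1/Q_{t+1}$ growth; the factor $c=\min(1,(1-h)L/(2h\sigma))$ in the statement is the largest fraction of the descent that can be converted into contraction while keeping (\ref{eq:generalprincipal}) valid. Verifying this margin-and-contraction balance explicitly will be the most computation-heavy part of the proof.
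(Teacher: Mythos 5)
Your outline follows the same overall strategy as the paper: establish a one-step inequality of the form \eqref{eq:asynseq} for PIAG via the descent lemma, proximal optimality, and the Jensen/Cauchy--Schwarz telescoping that introduces $L=\sqrt{(1/n)\sum_iL_i^2}$ and $W_\ell=\|x_{\ell+1}-x_\ell\|^2/\gamma_\ell$; then invoke Theorem~\ref{theo:sequence}. Your Steps~1 and the nonconvex part of Step~3 are on target, and your diagnosis in the ``Main obstacle'' paragraph---that the PL case requires shrinking $h$ to $\tilde h=(1+h)/2$ to create slack in \eqref{eq:generalprincipal} once $Q_{t+1}$ starts decaying---is exactly the paper's device.

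Two places, however, have gaps worth flagging. First, in the convex case you propose $V_k=(P(x_k)-P^\star)+\tfrac12\|x_k-x^\star\|^2$ and then argue via a separate telescoping sum plus ``monotonicity of $P(x_k)$.'' But $P(x_k)$ is not monotone here: the one-step inequality \eqref{eq:alphabetadef} has a nonnegative delay term $\tfrac12\gamma_k hL\sum_{j=k-\tau_k}^{k-1}W_j$ on the right, so $P(x_{k+1})\le P(x_k)$ need not hold. The paper sidesteps this entirely by putting the \emph{growing} weight $a_k=a_0+\sum_{\ell<k}\gamma_\ell$ in front of the gap term, i.e.\ $V_k=a_k(P(x_k)-P^\star)+\tfrac12\|x_k-x^\star\|^2$ with $q_k=1$, so that $V_k\le V_0$ from \eqref{eq:Vconverge} directly yields $P(x_k)-P^\star\le V_0/a_k$ with no monotonicity argument at all. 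Your version would at best bound $\min_{t\le k}(P(x_t)-P^\star)$, not $P(x_k)-P^\star$ itself; to recover the claimed last-iterate bound you need the $a_k$-weighting, which also determines $a_0=h(h+1)/(L(1-h))$ through the requirement that \eqref{eq:generalprincipal} hold.

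Second, in the PL case you write $q_k\le 1-c_1\gamma_k$ and then exponentiate. The paper actually gets $q_k=1/(1+c_1\gamma_k)$ (naturally, since you divide the descent inequality by $1+c_1\gamma_k$), and the passage from $1/(1+\epsilon)$ to $e^{-3\epsilon/4}$ requires the additional bound $\epsilon\le 1/3$, which in turn forces the constraints $c\le 1$, $\gamma_k\le \tilde h/L$, $\sigma\le L$. This is where the factor $3/4$ in the exponent of the theorem statement comes from; a bare $1-c_1\gamma_k\le e^{-c_1\gamma_k}$ argument would give a different (larger) constant, and would not match the claimed rate. Neither gap is conceptually fatal, but each requires an extra technical idea you have not supplied.
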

	\begin{proof}
		See Appendix \ref{proof:theoPIAG}.
	\end{proof}
	
	\added{The three cases roughly represent non-convex (1), convex (2) and strongly convex (3) objective functions, but note that the PL condition is less restrictive that strong convexity and can also be satisfied by some non-convex functions.}
	
	To get explicit convergence rates, we need to specialize the results to a specific step-size policy; we will do this in Section~\ref{ssec:adaptivestep}. Still, we can already now notice that the sum of the step-sizes, $\sum_{t=0}^{k-1}\gamma_t$ that dictates the convergence speed. This is immediate in case (2) and (3), but is also true in case (1), since the non-convex result also implies that 
	\begin{equation*}
		\min_{1\le t\le k}\!\|\nabla f(x_t)\!+\xi_t\|^2\!\le\! \frac{2(h^2\!-\!h\!+\!1)(P(x_0)\!-\!P^\star)}{(1-h)\sum_{t=0}^{k-1} \gamma_t}.
	\end{equation*}
	\deleted{
	The theorem establishes the convergence of PIAG under \eqref{eq:commonstepsizerule}. The convergence speed in \eqref{eq:runningbestPIAG}  and parts (2) -- (3) is dictated by step-size integral $\sum_{t=0}^{k-1} \gamma_t$. In particular, larger step-size integral results in smaller error bound and faster convergence. 
	We illustrate, further, in Section~\ref{ssec:adaptivestep} how to get convergence rates for particular adaptive step-sizes.
	}

	\deleted{Note, also, that the PL condition in part (3) includes some non-convex problems. }

	\subsection{Convergence of Async-BCD under principle \eqref{eq:commonstepsizerule}}\label{ssec:asyncbcdconverge}


\replaced{Next, we}{
We now illustrate how}  \deleted{is used} establish the convergence of Async-BCD with adaptive step-sizes for non-convex optimization problems. The following assumption \replaced{is}{will be} useful.
\begin{assumption} \label{asm:sc}
    \deleted{The} $f$ is differentiable and there exists $\hat{L}>0$ such that  for all $i,j\in[m]$ and $x\in \mathbb{R}^d$
    the following holds\footnote{$U_j:\mathbb{R}^{d_j}\rightarrow\mathbb{R}^d$ maps $x^{(j)}\in\mathbb{R}^{d_j}$ into a $d$-dimensional vector where the $j$th block is $x^{(j)}$ and other blocks are $0$.}
	\begin{equation*}
		\|\nabla_i f(x+U_jh_j) - \nabla_i f(x)\|\le \hat{L}\|h_j\|,~\forall h_j\in\mathbb{R}^{d_j}.
	\end{equation*}
\end{assumption}
The assumption \replaced{implies that $f$ is $L$-smooth}{indicates $L$-smoothness of $f$} for some $L\in [\hat{L}, m\hat{L}]$. We consider the block-wise constant $\hat{L}$ rather than $L$ because the former one is smaller, which in turn leads to larger step-size and faster convergence.
\deleted{We are now ready to present the main result. }

By showing that some quantities generated by Async-BCD satisfy the equation \eqref{eq:asynseq} in Theorem \ref{theo:sequence} (see Lemma \ref{lemma:ARock} in the Appendix), we derive the following theorem.
	\begin{theorem}\label{theo:coor}
	Suppose that each $R^{(i)}$ is convex  and closed,  $P^\star:=\min_x P(x)>-\infty$, and Assumption~\ref{asm:sc} holds. Let $\{x_k\}$ be generated by the Async-BCD algorithm with a step-size sequence $\{\gamma_t\}$ that satisfies (\ref{eq:commonstepsizerule}) with $\gamma' = {h}/{\hat{L}}$ for some $h\in (0,1)$. Then,
		\begin{equation*}
			\sum_{k=0}^\infty \gamma_kE[\|\tilde{\nabla} P(x_k)\|^2]\le \frac{4m(P(x_0)-P^\star)}{1-h},
		\end{equation*}
		where $\tilde{\nabla} P(x_k)=\hat{L}(\operatorname{prox}_{\frac{1}{\hat{L}} R}(x_k-\frac{1}{\hat{L}} \nabla f(x_k))-x_k)$.
	\end{theorem}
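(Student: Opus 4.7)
The plan is to apply Theorem~\ref{theo:sequence} with $V_k=E[P(x_k)-P^\star]$, $q_k\equiv 1$ (so $Q_k\equiv 1$), initial value $V_0 = P(x_0)-P^\star$, and $X_k$ chosen proportional to $\gamma_{k-1}E[\|\tilde{\nabla}P(x_{k-1})\|^2]$ so that the conclusion $\sum_k X_k\le V_0$ of Theorem~\ref{theo:sequence} directly yields the stated bound with constant $4m/(1-h)$. The preparatory work is a one-step recurrence of the form~\eqref{eq:asynseq} for these sequences, which the paper packages as Lemma~\ref{lemma:ARock}.

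To derive this recurrence, I would start from the block-coordinate Lipschitz bound supplied by Assumption~\ref{asm:sc}, namely $f(x_{k+1})\le f(x_k)+\langle \nabla_{j_k}f(x_k),x_{k+1}^{(j_k)}-x_k^{(j_k)}\rangle+\frac{\hat L}{2}\|x_{k+1}^{(j_k)}-x_k^{(j_k)}\|^2$, and combine it with the optimality inequality that the prox definition of $x_{k+1}^{(j_k)}$ produces from the delayed gradient, $\langle \nabla_{j_k}f(\hat x_k),x_{k+1}^{(j_k)}-x_k^{(j_k)}\rangle+R^{(j_k)}(x_{k+1}^{(j_k)})-R^{(j_k)}(x_k^{(j_k)})\le -\frac{1}{2\gamma_k}\|x_{k+1}^{(j_k)}-x_k^{(j_k)}\|^2$. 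Since $\gamma_k\le h/\hat L$ by~\eqref{eq:commonstepsizerule}, summing the two produces a strictly negative term $-\frac{1-h}{2\gamma_k}\|x_{k+1}-x_k\|^2$ together with the delay-induced cross term $\langle\nabla_{j_k}f(x_k)-\nabla_{j_k}f(\hat x_k),x_{k+1}^{(j_k)}-x_k^{(j_k)}\rangle$.

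To control that cross term I would exploit the inconsistent-read representation~\eqref{eq:inconsistentreadhatx}, apply Assumption~\ref{asm:sc} telescopically along the at most $\tau_k$ single-block differences that make up $x_k-\hat x_k$, and invoke Cauchy--Schwarz to obtain $\|\nabla_{j_k}f(x_k)-\nabla_{j_k}f(\hat x_k)\|^2\le \hat L^2\tau_k\sum_{\ell=k-\tau_k}^{k-1}\|x_{\ell+1}-x_\ell\|^2$. A Young-style split of the cross term then trades a fraction of the $-\frac{1-h}{2\gamma_k}\|x_{k+1}-x_k\|^2$ buffer for this sum of past increments, producing the $p_k\sum_\ell W_\ell-r_k W_k$ structure demanded by~\eqref{eq:asynseq}. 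Taking the conditional expectation over the uniformly chosen block $j_k$ converts $\|x_{k+1}^{(j_k)}-x_k^{(j_k)}\|^2$ into $\frac{1}{m}\|\bar x_{k+1}-x_k\|^2$, where $\bar x_{k+1}=\prox_{\gamma_k R}(x_k-\gamma_k\nabla f(\hat x_k))$; this $\frac{1}{m}$ factor is ultimately responsible for the $4m$ in the final bound.

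The remaining steps are to check that the step-size rule~\eqref{eq:commonstepsizerule} (with $\gamma'=h/\hat L$) implies the monotonicity condition~\eqref{eq:generalprincipal} for the resulting $p_k,r_k,q_k$, and then to invoke Theorem~\ref{theo:sequence}. The main obstacle I anticipate is tying $E\|x_{k+1}-x_k\|^2$, which is built from step-size $\gamma_k$ and the delayed gradient $\nabla f(\hat x_k)$, to the gradient mapping $\|\tilde{\nabla}P(x_k)\|^2$, which is defined with the fixed step $1/\hat L$ and the exact gradient $\nabla f(x_k)$. Because the two prox operators use different step-sizes, the usual firm non-expansiveness argument does not apply in one line; one must track the delayed-gradient perturbation via the same bound on $\|\nabla f(x_k)-\nabla f(\hat x_k)\|$ derived above and fold it into the recurrence. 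Getting this comparison right is what pins down the precise form of $X_k$ and hence the clean constant $4m/(1-h)$ in the statement.
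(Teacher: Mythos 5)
Your overall plan is right: apply Theorem~\ref{theo:sequence} with $V_k=E[P(x_k)]-P^\star$, $q_k\equiv 1$, build the one-step recurrence~\eqref{eq:asynseq} from block-wise Lipschitz descent plus prox optimality, verify~\eqref{eq:generalprincipal}, and read off the bound. But two of your intermediate steps would not close.

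The first is the delay-error bound. You propose $\|\nabla_{j}f(x_k)-\nabla_{j}f(\hat x_k)\|^2\le\hat L^2\tau_k\sum_{\ell=k-\tau_k}^{k-1}\|x_{\ell+1}-x_\ell\|^2$ by plain Cauchy--Schwarz, which leaves an explicit $\tau_k$ in $p_k$ and forces $W_\ell=\|x_{\ell+1}-x_\ell\|^2$. With those choices, verifying~\eqref{eq:generalprincipal} would require a condition of the form $\tau_k\gamma_k\le\mathrm{const}$, not the cumulative rule~\eqref{eq:commonstepsizerule}. The paper instead applies a \emph{weighted} Cauchy--Schwarz, inserting $\sqrt{\gamma_t}$ factors so the delay enters as $\bigl(\sum_{t=k-\tau_k}^{k-1}\gamma_t\bigr)\sum_{t}W_t$ with $W_t=\|x_{t+1}-x_t\|^2/\gamma_t$; the first factor is exactly what~\eqref{eq:commonstepsizerule} caps by $h/\hat L$. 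Then $p_k=\hat L\gamma_k/2$, $r_k=h/2-p_k$, and~\eqref{eq:generalprincipal} reduces literally to $\sum_{t=\ell}^{k}\gamma_t\le h/\hat L$. The second gap is the passage to $\tilde\nabla P(x_k)$. You correctly flag the mismatch between the prox at step $\gamma_k$ with the delayed gradient and the map $\tilde\nabla P$ defined with step $1/\hat L$ and the exact gradient, but you do not name a resolution. The paper first forms $\prox_{\gamma_k R^{(j)}}(x_k^{(j)}-\gamma_k\nabla_{j}f(x_k))$ (exact gradient, step $\gamma_k$), controls its distance to $x_{k+1}^{(j)}$ via non-expansiveness of the prox together with the delay bound above, and then invokes the step-size monotonicity of the proximal gradient mapping (Lemma~1 of~\cite{karimi16}): since $\gamma_k\le 1/\hat L$, the gradient-map norm at step $\gamma_k$ dominates $\|\tilde\nabla P(x_k)\|$. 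Without that comparison lemma there is no way to get from quantities built at step $\gamma_k$ to the fixed-step quantity $\tilde\nabla P(x_k)$. (Your Young-split of the cross term is a workable alternative to the paper's complete-the-square plus separately added non-expansiveness inequality, but both fixes above are still required.)
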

	\begin{proof}
		See Appendix \ref{proof:theocoor}.
	\end{proof}
	The theorem establishes the convergence of Async-BCD under adaptive step-sizes.
	Note that $\tilde{\nabla} P(x)=\mathbf{0}$ if and only if $\mathbf{0}\in \partial P(x)$, i.e., $x$ is a stationary point of problem \eqref{eq:generalprob}. Moreover, Theorem \ref{theo:coor} implies
	\begin{equation*}
		\min_{1\le t\le k}\|\tilde{\nabla} P(x_t)\|^2\le \frac{4m(P(x_0)-P^\star)}{(1-h)\sum_{t=0}^{k} \gamma_t}.
	\end{equation*}
	Similar to PIAG, a larger step-size integral leads to smaller error bound in the above equation, which intuitively implies faster convergence of Async-BCD.
	
	\subsection{Delay-adaptive step-size satisfying \eqref{eq:commonstepsizerule}}\label{ssec:adaptivestep}
	
	By the analysis in Section \ref{ssec:PIAGconvergence}--\ref{ssec:asyncbcdconverge}, all step-sizes satisfying $\sum_{t=0}^\infty \gamma_t=+\infty$ and the principle \eqref{eq:commonstepsizerule} guarantee convergence of PIAG and Async-BCD. In this section, we make these results more concrete for two specific adaptive step-size policies that both satisfy 
	\eqref{eq:commonstepsizerule}:
    \begin{description}
        \item[Adaptive 1:] for some $\alpha\in (0,1]$,
    	\begin{equation}\label{eq:PIAGadapt}
		    \gamma_k = \alpha\max\{\gamma' - \sum_{t=k-\tau_k}^{k-1} \gamma_t,0\}.
	    \end{equation}
	    \item[Adaptive 2:]
	    \begin{equation}\label{eq:adapt2}
		    \gamma_k = 
		    \begin{cases}
		      \frac{\gamma^{\prime}}{\tau_k+1}, & \frac{\gamma^{\prime}}{\tau_k+1} \le \gamma'-\sum_{t=k-\tau_k}^{k-1} \gamma_t,\\
		      0, & \text{otherwise}.
		    \end{cases}
	    \end{equation}
    \end{description}

    \replaced{In contrast to existing step-size proposals for asynchronous optimization algorithms, these step-size policies use the actual system delays and do not depend on a (potentially large) upper bound of the maximal delay. When the system operates with small or no delays, these step-sizes approach $\gamma^{\prime}$,
    and if the delays grow large, the step-sizes will be automatically reduced (and occasional updates may be skipped) to guarantee convergence. The performance improvements of these policies over fixed step-size policies depend on the precise nature of the actual delays.}{
	Existing fixed step-sizes in asynchronous algorithms often rely on an usually unknown upper bound of the worst-case delay, while the two delay-adaptive step-sizes \eqref{eq:PIAGadapt}--\eqref{eq:adapt2} only use true delays which are already available in Algorithms \ref{alg:PIAG}, \ref{alg:Coor}. As a result, the two step-sizes are easier to implement compare to fixed step-sizes in the literature. }
	
    \replaced{
    We begin by proving that the two adaptive step-size policies are no worse than the state-of-the-art step-sizes (that require knowledge of the maximal delay). As shown in sections~\ref{ssec:PIAGconvergence}--\ref{ssec:asyncbcdconverge}, the convergence speed depends on the sum of step-sizes. Our first observation is therefore the following.
	}{
	According to \eqref{eq:runningbestPIAG}, \eqref{eq:runningbestcoor}, and Theorems \ref{theo:nonconvexPIAG}--\ref{theo:coor}, larger step-size integral potentially leads to faster convergence of PIAG and Async-BCD. An unexpected result is that in terms of the step-size integral $\sum_{t=0}^k \gamma_t$, the two delay-adaptive step-sizes are at least not worse than two fixed step-sizes that require upper bounds of the worst-case delay (See Proposition \ref{prop:stepsizeintegralbound}). {\color{red}In addition, in case $\tau_t=\tau$ for some $t\in [0,k]$, the largest possible step-size integral is $\sum_{t=0}^k \gamma_t=k\gamma'$, which can be attained by Adaptive 1 with $\alpha=1$ and Adaptive 2 for the burst delay, i.e., $\tau_t=\tau$ at one epoch in $[0,k]$ and $\tau_t=0$ otherwise.}
	
	{\color{red} proof: suppose that $\tau_{k'}=\tau$ for some $k'\in [0,k]$. If $\gamma_{k'}=0$, then because each $\gamma_t\le \gamma'$, we have $\sum_{t=0}^k \gamma_t \le k\gamma'$; Otherwise, by \eqref{eq:commonstepsizerule}, $\sum_{t=k'-\tau}^{k'} \gamma_t\le \gamma'$. In addition, $\gamma_t\le \gamma'$ for all $t\in[0,k]$. We have $\sum_{t=0}^k \gamma_t\le (k-\tau+1)\gamma'\le k\gamma'$.
	
	For Adaptive 1 with $\alpha=1$ and Adaptive 2, in case of burst delay, they have $k$ step-sizes being $\gamma'$ in $[0,k]$, and one step-size being $0$, so that they attain the upper bound $k\gamma'$.
	}

}
	\begin{proposition}\label{prop:stepsizeintegralbound}
		Suppose that $\tau_k\le \tau$ for all $k\in\mathbb{N}_0$. Under the step-size policy~\eqref{eq:PIAGadapt}, it holds that
		\begin{equation}\label{eq:stepsizesummationPIAG}
			\sum_{t=0}^k\gamma_t\ge (k+1)\cdot\frac{\alpha\gamma'}{\tau+1},
		\end{equation}
		while the step-size policy~\eqref{eq:adapt2} guarantees that
		\begin{equation}\label{eq:stepsizeintegral2}
			\sum_{t=0}^k \gamma_t\ge (k+1)\cdot\frac{\tau \gamma'}{(\tau+1)^2}.
		\end{equation}
	\end{proposition}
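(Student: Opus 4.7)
The plan is to handle the two bounds separately, as they require different arguments. For both, the starting point is the observation that $\tau_0 = 0$ forces $\gamma_0 = \alpha\gamma'$ (resp.\ $\gamma'$) so the base case is easy.

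For the Adaptive~1 bound (\ref{eq:stepsizesummationPIAG}), I would proceed via a ``per-window'' lower bound and then sum. Concretely, I would first show that for every $k$,
$\sum_{t=\max(0,k-\tau)}^{k}\gamma_t \ge \alpha\gamma'$, by a two-case split on $\gamma_k$. If $\gamma_k > 0$, the formula (\ref{eq:PIAGadapt}) gives $\gamma_k = \alpha(\gamma' - \sum_{t=k-\tau_k}^{k-1}\gamma_t)$, hence $\sum_{t=k-\tau_k}^{k}\gamma_t = \alpha\gamma' + (1-\alpha)\sum_{t=k-\tau_k}^{k-1}\gamma_t \ge \alpha\gamma'$. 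If $\gamma_k = 0$, then by construction $\sum_{t=k-\tau_k}^{k-1}\gamma_t \ge \gamma' \ge \alpha\gamma'$. Since $\tau_k\le\min(k,\tau)$, enlarging the window to $[\max(0,k-\tau),k]$ only adds non-negative terms. Summing the window bound over $j=0,\dots,k$ and noting that each $\gamma_t$ for $t \in [0,k]$ is counted at most $\tau+1$ times on the right-hand side gives $(\tau+1)\sum_{t=0}^{k}\gamma_t \ge (k+1)\alpha\gamma'$, which is exactly (\ref{eq:stepsizesummationPIAG}).

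For the Adaptive~2 bound (\ref{eq:stepsizeintegral2}), the same per-window argument only yields the weaker bound $(k+1)\gamma'/(\tau+1)^2$, because a single active iteration with $\tau_k=\tau$ contributes only $\gamma'/(\tau+1)$ to its window. I would instead induct on $k$ with the stronger hypothesis $S_k := \sum_{t=0}^{k}\gamma_t \ge (k+1)B$, where $B := \tau\gamma'/(\tau+1)^2$, using the convention $S_{-1}=0$. The base case is clear. For the inductive step: if $\gamma_k>0$ then $\gamma_k = \gamma'/(\tau_k+1) \ge \gamma'/(\tau+1) > B$, so $S_k \ge S_{k-1}+B \ge (k+1)B$. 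If $\gamma_k=0$ then $\tau_k\ge 1$ (since $\tau_k=0$ would force $\gamma_k=\gamma'$), and the rule (\ref{eq:adapt2}) gives $S_{k-1} - S_{k-\tau_k-1} > \gamma'\tau_k/(\tau_k+1)$. Combining with the hypothesis $S_{k-\tau_k-1}\ge (k-\tau_k)B$ reduces everything to the algebraic inequality $\gamma'\tau_k/(\tau_k+1) \ge (\tau_k+1)B$, i.e.\ $\tau_k(\tau+1)^2 \ge (\tau_k+1)^2\tau$. Expanding both sides shows this factors as $(\tau-\tau_k)(\tau\tau_k-1) \ge 0$, which clearly holds whenever $1\le \tau_k \le \tau$.

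The main obstacle is Case~2 for Adaptive~2: the naive per-window argument loses a factor of $\tau$, since an active iteration contributes as little as $\gamma'/(\tau+1)$ to its window. The resolution is amortization: the very fact that the adaptive rule \emph{suppresses} the update at step $k$ certifies that the previous $\tau_k$ steps have already accumulated at least $\gamma'\tau_k/(\tau_k+1)$ of step-size, which is enough to cover the ``missing'' $B$ in the running total. The clean factorization $(\tau-\tau_k)(\tau\tau_k-1)\ge 0$ is what makes this bookkeeping close exactly at the target constant $B=\tau\gamma'/(\tau+1)^2$.
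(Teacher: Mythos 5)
Your proof is correct, and it splits naturally by how much it overlaps with the paper's. For the Adaptive~2 bound (\ref{eq:stepsizeintegral2}) you and the paper take the same route: an induction on $k$ with the running-sum hypothesis $S_{\ell}\ge(\ell+1)\tau\gamma'/(\tau+1)^2$, a split on whether $\gamma_k$ is zero, and, in the suppressed case, the observation that the rule itself certifies $\sum_{t=k-\tau_k}^{k-1}\gamma_t > \tau_k\gamma'/(\tau_k+1)$ with $\tau_k\ge 1$. The paper closes the same algebraic gap by invoking monotonicity of $x\mapsto x/(x+1)^2$ on $x\ge 1$; your factorization $(\tau-\tau_k)(\tau\tau_k-1)\ge 0$ is the same inequality written out explicitly, and is arguably the cleaner way to see it.

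For the Adaptive~1 bound (\ref{eq:stepsizesummationPIAG}), however, your argument genuinely differs from the paper's. You establish the per-window lower bound $\sum_{t=\max(0,j-\tau)}^{j}\gamma_t\ge\alpha\gamma'$ for \emph{every} $j$, sum these over $j=0,\dots,k$, and then close with a double-counting step: each $\gamma_t$ appears in at most $\tau+1$ of the sliding windows $[\max(0,j-\tau),j]$, so $(\tau+1)\sum_{t=0}^{k}\gamma_t\ge(k+1)\alpha\gamma'$. The paper instead constructs an explicit increasing subsequence $t_0=0$, $t_{j+1}=\min\{t:t-\tau_t>t_j\}$, shows these pick out pairwise-disjoint windows each contributing $\ge\alpha\gamma'$, and lower-bounds the number of such windows inside $[0,k]$ by $(k+1)/(\tau+1)$ using $t_{j+1}\le t_j+\tau+1$. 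Both routes prove the same per-window fact by the same two-case analysis of (\ref{eq:PIAGadapt}); the difference is purely in how the windows are aggregated. Your overlap-counting version avoids constructing the auxiliary index sequence $\{t_j\}$ and the attendant bookkeeping with $N_k$, so it is shorter and arguably more transparent, while the paper's disjoint-window construction has the minor virtue of making the windows that carry the weight explicit. Both are correct.
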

	\begin{proof}
		See Appendix \ref{sec:proofprop}.
	\end{proof}
	\replaced{The lower bounds in Proposition~\ref{prop:stepsizeintegralbound} are comparable with $k+1$ applications of the state-of-the art fixed step-sizes for PIAG }{
	Both fixed step-sizes $\frac{\alpha\gamma'}{\tau+1}$ and $\frac{\tau\gamma'}{(\tau+1)^2}$ in Proposition \ref{prop:stepsizeintegralbound} are comparable with those in the state-of-the-art works on PIAG and Async-BCD. Specifically, the largest fixed step-size in the literature is} \deleted{$\frac{h}{L(\tau+1/2)}$} \cite{Sun19,Deng20} \deleted{for PIAG} and for Async-BCD \deleted{, $\frac{h}{\hat{L}+2L\tau/\sqrt{m}}$} \cite{Davis16}, respectively. \replaced{This suggests that the adaptive step-size policies should be able to guarantee the same convergence rate. The next result shows that this is indeed the case.}{
	Proposition \ref{prop:stepsizeintegralbound} also guarantees the convergence of PIAG and Async-BCD with the two adaptive step-sizes.} 
	\begin{corollary}\label{Cor:adaptiveconverge}
		Suppose that $\tau_k\le \tau$ for all $k\in\mathbb{N}_0$ \replaced{and that the step-size is determined using either \eqref{eq:PIAGadapt} or \eqref{eq:adapt2}. Then}{. If either \eqref{eq:PIAGadapt} or \eqref{eq:adapt2} holds,}
		\begin{itemize}
			\item \replaced{for PIAG under the conditions of Theorem~\ref{theo:nonconvexPIAG}, in case (1)}{PIAG: under the corresponding conditions in Theorem \ref{theo:nonconvexPIAG}, \textbf{Non-convex}:} $\min_{1\le t\le k}\|\nabla f(x_t)+\xi_t\|^2=O(1/k)$, \replaced{in case (2) }{\textbf{Convex}:} $P(x_k)-P^\star=O(1/k)$, and \replaced{in case (3)}{\textbf{PL}:} $P(x_k)-P^\star\le O(\lambda^k)$ for some $\lambda\in (0,1)$.
			\item \replaced{for Async-BCD}{Async-BCD:} under the conditions in Theorem \ref{theo:coor}, $\min_{1\le t\le k}\|\tilde{\nabla} P(x_t)\|^2=O(1/k)$.
		\end{itemize}
	\end{corollary}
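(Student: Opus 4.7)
The proof is essentially a direct substitution: combine the linear-in-$k$ lower bounds on the step-size integral provided by Proposition~\ref{prop:stepsizeintegralbound} with the rate bounds already established in Theorem~\ref{theo:nonconvexPIAG} and Theorem~\ref{theo:coor}. The plan is as follows.

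First, I would observe that both adaptive policies satisfy the step-size principle \eqref{eq:commonstepsizerule} with $\gamma' = h/L$ (for PIAG) or $\gamma' = h/\hat L$ (for Async-BCD), so the hypotheses of Theorems~\ref{theo:nonconvexPIAG} and \ref{theo:coor} hold. Next, I would apply Proposition~\ref{prop:stepsizeintegralbound} to extract a constant $c_\star > 0$ such that
\begin{equation*}
\sum_{t=0}^{k-1} \gamma_t \;\ge\; c_\star \cdot k, \qquad \forall k\in\mathbb{N},
\end{equation*}
where $c_\star = \alpha\gamma'/(\tau+1)$ under policy~\eqref{eq:PIAGadapt} and $c_\star = \tau\gamma'/(\tau+1)^2$ under policy~\eqref{eq:adapt2}. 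This is the only place where the boundedness assumption $\tau_k \le \tau$ enters the argument.

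For PIAG part~(1), I would use the running-best form of Theorem~\ref{theo:nonconvexPIAG}(1) derived in the text, namely
\begin{equation*}
\min_{1\le t\le k}\|\nabla f(x_t) + \xi_t\|^2 \;\le\; \frac{2(h^2-h+1)(P(x_0)-P^\star)}{(1-h)\sum_{t=0}^{k-1}\gamma_t},
\end{equation*}
and substitute the lower bound on $\sum_{t=0}^{k-1}\gamma_t$ to conclude $O(1/k)$. For part~(2), I would plug the same lower bound directly into the bound of Theorem~\ref{theo:nonconvexPIAG}(2), whose denominator $1 + \frac{1}{a_0}\sum_{t=0}^{k-1}\gamma_t$ then grows linearly in $k$, giving $O(1/k)$. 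For part~(3), substituting into Theorem~\ref{theo:nonconvexPIAG}(3) yields $P(x_k) - P^\star \le \lambda^k (P(x_0) - P^\star)$ with $\lambda = \exp\!\bigl(-\tfrac{3c\sigma(1-\tilde h)}{4(\tilde h^2-\tilde h+1)} c_\star\bigr) \in (0,1)$; the only thing to verify is that the exponent is strictly negative, which follows because $c,\sigma,c_\star > 0$ and $\tilde h = (1+h)/2 \in (0,1)$. The Async-BCD bullet is handled identically by inserting the lower bound into the running-best consequence of Theorem~\ref{theo:coor}.

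There is no real obstacle here: the work is almost entirely bookkeeping of constants, since Theorems~\ref{theo:nonconvexPIAG} and \ref{theo:coor} already express all rates as explicit functions of $\sum_{t=0}^{k-1}\gamma_t$ and Proposition~\ref{prop:stepsizeintegralbound} converts that sum into a linear-in-$k$ quantity. The only mildly delicate point is confirming $\lambda \in (0,1)$ in the PL case, which is immediate from the sign of the exponent.
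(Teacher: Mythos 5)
Your proposal is correct and matches the paper's own (very terse) proof, which simply invokes Theorems~\ref{theo:nonconvexPIAG}--\ref{theo:coor} together with Proposition~\ref{prop:stepsizeintegralbound}; you have just carried out the substitution explicitly. The only mildly notable point you added is the sign check confirming $\lambda\in(0,1)$ in the PL case, which is indeed the one place where a reader might pause, and your verification is sound.
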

	\begin{proof} Immediate from Theorem 
		\deleted{The results are straightforward to see from Theorems} \ref{theo:nonconvexPIAG}--\ref{theo:coor} and Prop.~\ref{prop:stepsizeintegralbound}.
	\end{proof}
	Although the two adaptive step-sizes do not rely on the delay bound, the rates in Corollary \ref{Cor:adaptiveconverge} still reach the best-known order compared to related works on PIAG \cite{aytekin16,vanli2018global,Sun19,Deng20,Feyzmahdavian21} and Async-BCD \cite{Davis16,Sun17,liu2014,liu2015} that use such information in their step-sizes.

    \replaced{On the other hand, there are time-varying delays for which the adaptive step-sizes are guaranteed to perform much better than the fixed step-sizes. At the extreme, if the worst-case delay only occurs once and the system operates without delays afterwards (we call this a ``burst'' delay) the adaptive step-sizes will run with step-size $\gamma^{\prime}$. 
    The sum of step-sizes then tends to a value that is $\tau+1$ times larger than for the fixed step-sizes, with a corresponding speed-up.}

\deleted{
	\subsection{Superiority of the delay-adaptive step-sizes}\label{ssec:comparison}}
		\begin{figure*}[!htb]
		\caption{Comparison of delay-adaptive step-size and fixed step-size in delay models. The legends in (b),(c) follow those in (a).}
		\subfigure[constant delay]{\includegraphics[width=0.33\linewidth]{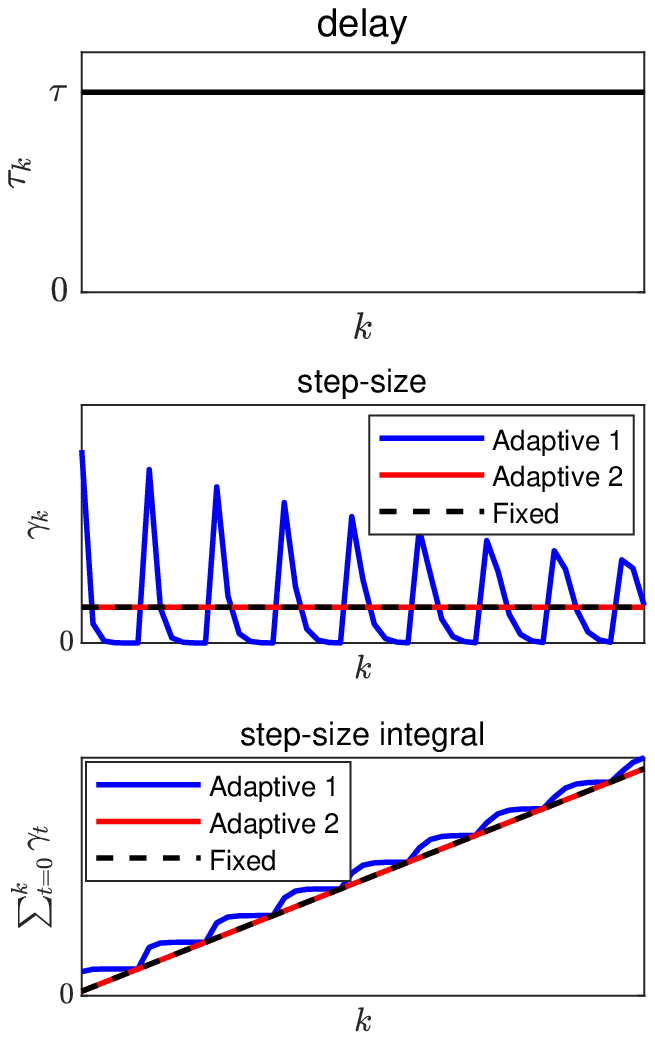}}
		\subfigure[random delay]{\includegraphics[width=0.33\linewidth]{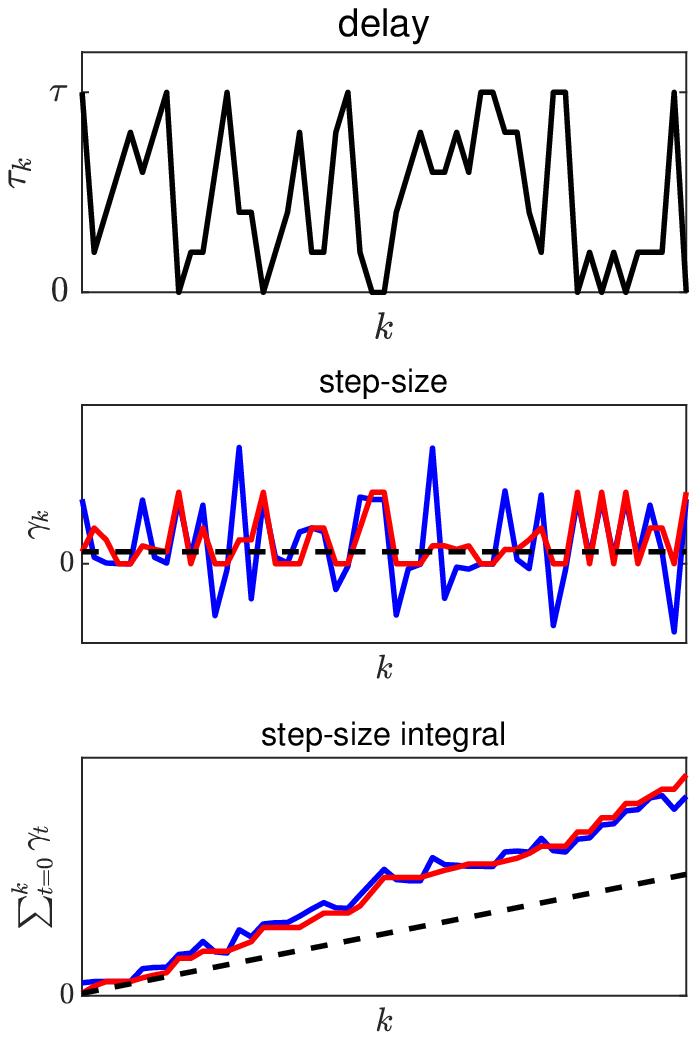}}
		\subfigure[burst delay]{\includegraphics[width=0.33\linewidth]{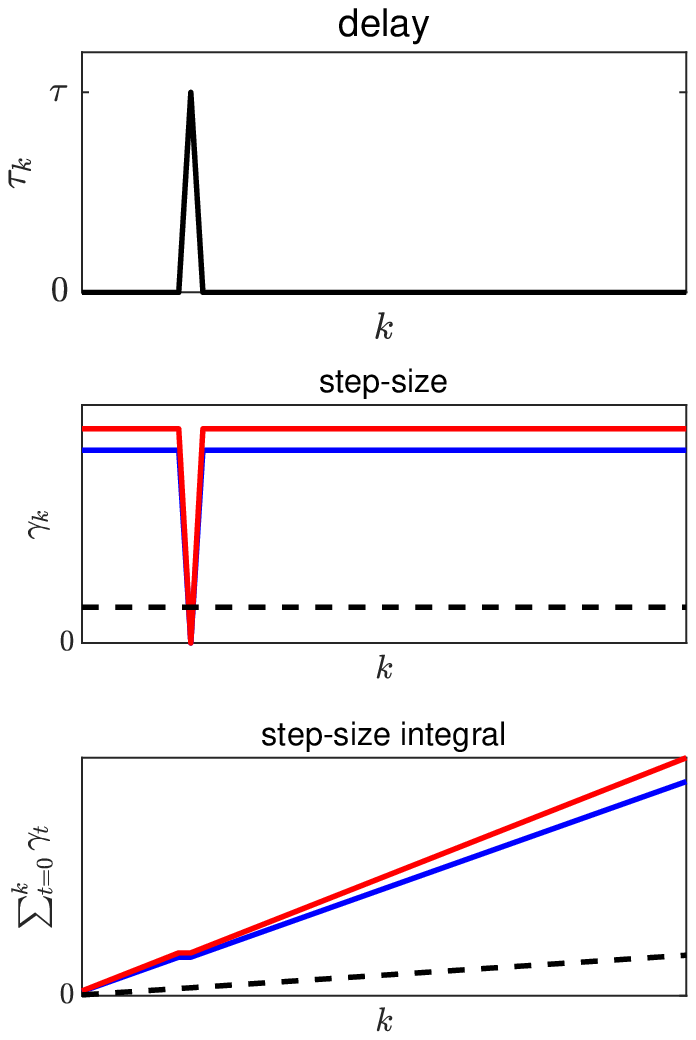}}
		\label{fig:delayallin1}
	\end{figure*}

\deleted{	
	Proposition \ref{prop:stepsizeintegralbound} derived for general bounded delay sequence only guarantees that the two adaptive step-sizes \eqref{eq:PIAGadapt}--\eqref{eq:adapt2} will not be worse than some fixed step-sizes. This is natural because for some particular delay sequences such as all the delays are equal to the maximal delay, it's difficult for delay-adaptive step-sizes to outperform fixed step-sizes, as the power of delay-adaptive step-sizes are their ability of adaption in different delays.} 
	
	\replaced{To obtain a more balanced comparison, we simulate the two adaptive step-size policies under the following delays:}{
	In this subsection, we illustrate the superiority of the two adaptive step-sizes by considering concrete delay models:}
	\begin{enumerate}[1)]
        \item constant: $\tau_k = \tau$.
		\item random: $\tau_k$ is drawn from $[0,\tau]$ uniformly at random.
		\item burst: $\tau_k=\tau$ at one epoch and $\tau_k=0$ otherwise.
	\end{enumerate}
	and compare these with the fixed step-size $\gamma_k=\gamma^{\prime}/(\tau+1)$. This step-size
	satisfies \eqref{eq:commonstepsizerule} and is comparable to state-of-the-art fixed step-sizes for PIAG and Async-BCD.
	
	We visualize the three delay models, the step-size $\gamma_k$, and the step-size integral $\sum_{t=0}^k \gamma_t$ in Figure \ref{fig:delayallin1}, in which we set $\alpha=0.9$ in Adaptive 1 and $\tau=5$ in all three models. We can make the following observations:

	\deleted{We have the following observations from Figure \ref{fig:delayallin1}:}
	\begin{itemize}
		\item In all three delay models, the sum of step-sizes for the two adaptive policies are at least similar to that of the fixed step-size, which validates Proposition \ref{prop:stepsizeintegralbound}.	
		\item The adaptive policies show the greatest superiority compared to the fixed step-size under the burst delay, where 
		the sum is asymptotically $\alpha(\tau+1)$ and $\tau+1$ times that of  the fixed step-size, respectively.
		\item When the proportion of small delays increases (constant$\rightarrow$random$\rightarrow$burst), so does the sum of step-sizes for the two delay-adaptive policies, reflecting their excellent adaption abilities to the true delay.
		\item Adaptive 2 is smoother and closer to its average behaviour than Adaptive 1, which often implies better robustness against noise.

	\end{itemize}

	\section{Numerical Experiments}
	
	Although the case for delay-adaptive step-sizes should be clear by now, we also demonstrate the end-effect on a simple machine learning problem. 
	We consider classification problem on the training data sets of rcv1 \cite{lewis2004rcv1} and MNIST \cite{deng2012mnist}, using the regularized logistic regression model: $f(x)= \frac{1}{N}\sum_{i=1}^N\left(\log(1+e^{-b_i(a_i^Tx)})+\frac{\lambda_2}{2}\|x\|^2\right)$, $R(x) = \lambda_1\|x\|_1$, where $a_i$ is the feature of the $i$th sample, $b_i$ is the corresponding label, and $N$ is the number of samples. We pick $(\lambda_1,\lambda_2)=(10^{-5},10^{-4})$ for rcv1 and $(\lambda_1,\lambda_2)=(10^{-3},10^{-4})$ for MNIST.
	We \replaced{run}{employ} both PIAG and Async-BCD \deleted{to solve the regression problem} and compare the performance of delay-adaptive step-sizes and fixed step-sizes. 
	In the first adaptive policy (Adaptive 1), we let $\alpha=0.9$.

	\subsection{PIAG}
	
	We split the samples in each data set into $n=10$ batches and assign each batch to a single worker. We run the method on a machine with Intel Xeon Silver 4210 Processor including 10-cores and 20 threads, where 1 thread is server and 10 threads are workers.

	We compare the two delay-adaptive step-sizes with $\gamma'=\frac{h}{L}$ with the fixed step-size $\gamma_k=\frac{h}{L(\tau+1/2)}$ in \cite{Sun19,Deng20}, where $h=0.99$ in all three step-sizes. At each iteration, only one batched gradient is updated, i.e., $\mathcal{R}=1$ in Algorithm \ref{alg:PIAG}. The distributions of the generated $\{\tau_k\}$ are plotted in Figure \ref{fig:delay}(a), where the maximal delays for rcv1 and MNIST are $75$ and $74$, respectively, and are much larger than most $\tau_k$'s (over $92\%$ $\tau_k$'s are smaller than or equal to $25$ for both data sets). Moreover, the workers have different maximal delays varying in $[31,75]$ and $[34,74]$ for rcv1 and MNIST, respectively, reflecting differences in their computing power.

	\begin{figure}[!htb]
		\centering
		\caption{Convergence of PIAG}
		\vspace{0.2cm}
		
		\subfigure[rcv1]{
			\includegraphics[scale=0.8]{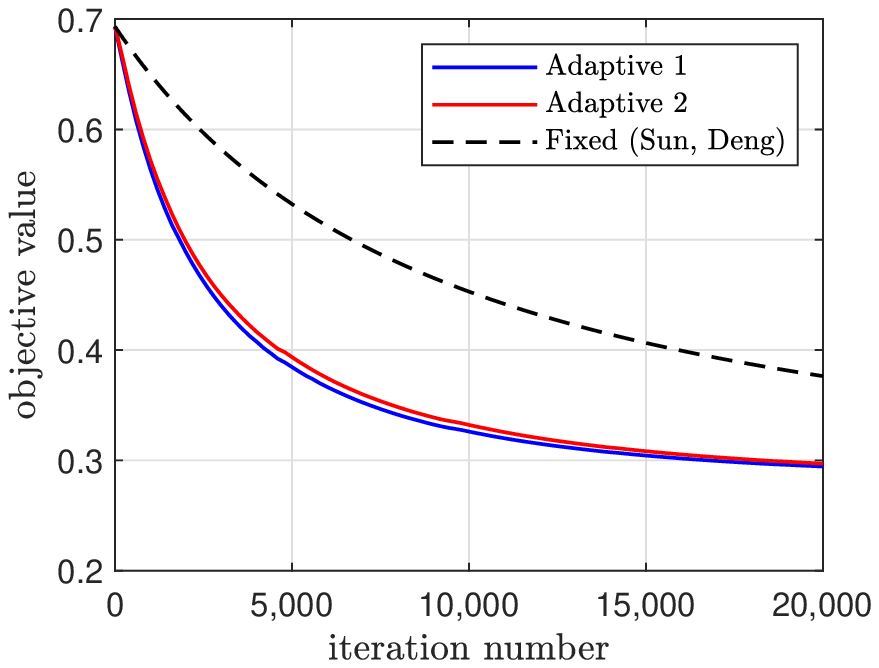}\label{fig:PIAGrcv1}}
		\subfigure[MNIST]{
			\includegraphics[scale=0.8]{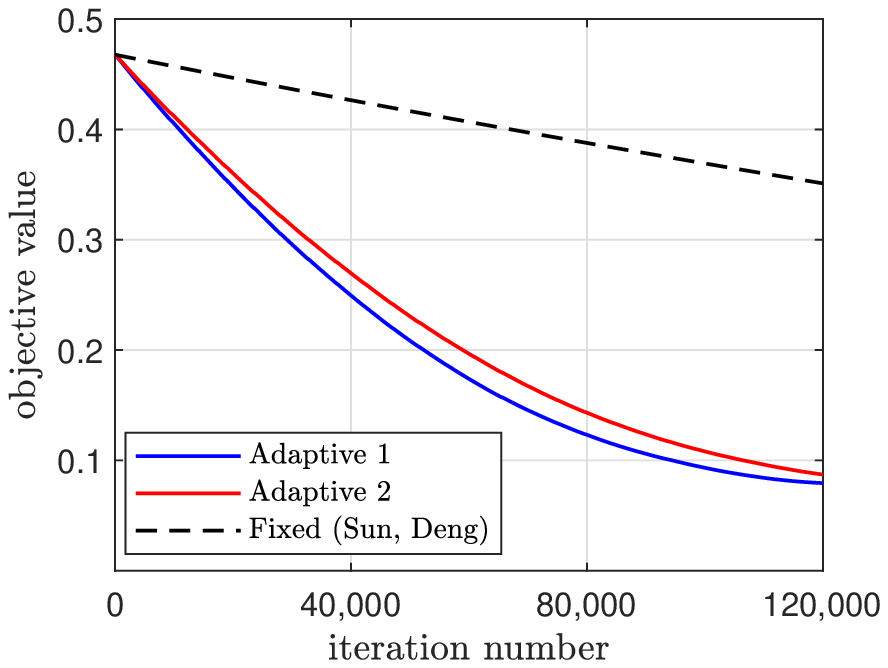}\label{fig:PIAGmnist}}
		\label{fig:PIAG}
	\end{figure}
	
	The objective error of PIAG with the three step-sizes is shown in Figure \ref{fig:PIAG}. Clearly, PIAG converges much faster under the delay-adaptive step-sizes than under the fixed step-size on both data sets. For example in Figure \ref{fig:PIAG}(a), compared to the fixed step-size, PIAG with Adaptive 1 and Adaptive 2 only need approximately $1/3$ and $1/2$ the number of iterations, respectively, to achieve objective value of $0.3$. This demonstrates the effectiveness of the our adaptive policies.
	
	\begin{figure}[!ht]
		\centering
		\caption{Delay distribution}
		\vspace{0.2cm}
        \subfigure[PIAG (10 workers)]{
			\includegraphics[scale=0.8]{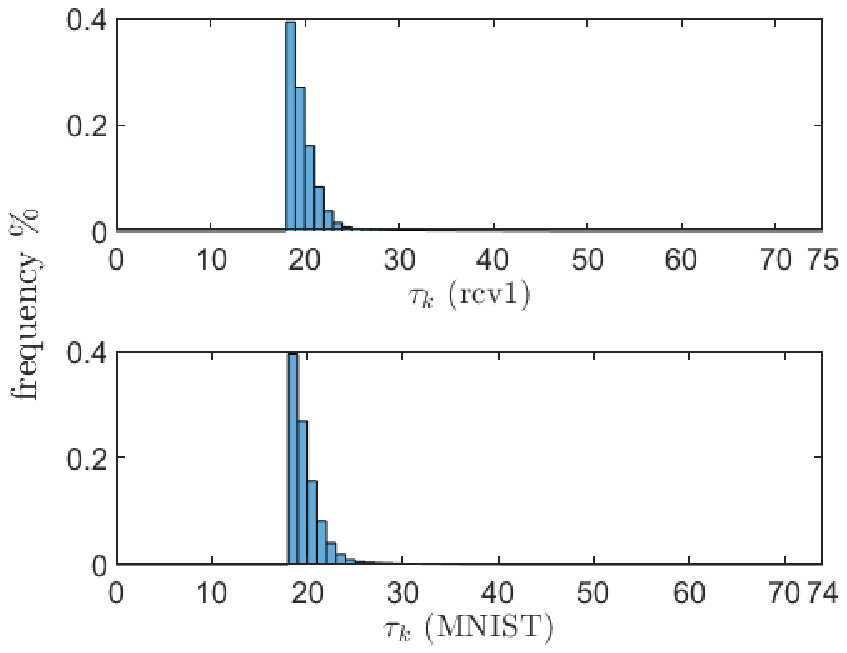}\label{fig:delayPIAG}}~~
		\subfigure[Async-BCD (8 workers)]{
    	\includegraphics[scale=0.8]{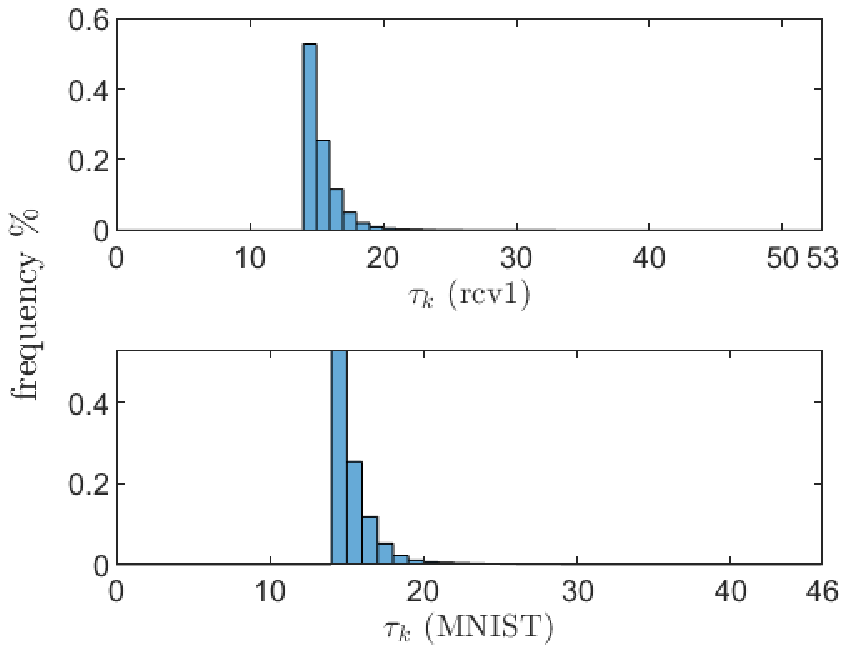}\label{fig:delayBCD}}
		\label{fig:delay}
	\end{figure}
	
	\subsection{Async-BCD}
	
	We use $n=8$ workers and split the variable $x$ into $m=20$ blocks almost evenly, with some blocks having one dimension more than the other blocks. We implement the algorithm on threads 1-8 of a machine with Intel Xeon Silver 4210 Processor including 10-cores and 20 threads.
	
	We compare the two delay-adaptive step-sizes with $\gamma'=\frac{h}{\hat{L}}$ with fixed step-sizes $\gamma_k=\frac{h}{L(\tau+1/2)}$ in \cite{Sun17} and $\gamma_k=\frac{h}{\hat{L}+2L\tau/\sqrt{m}}$ in \cite{Davis16}. In all cases, $h=0.99$. 
	
	\begin{figure}[!ht]
		\centering
		\caption{Convergence of Async-BCD.}
		\vspace{0.2cm}
        \subfigure[rcv1]{
			\includegraphics[scale=0.8]{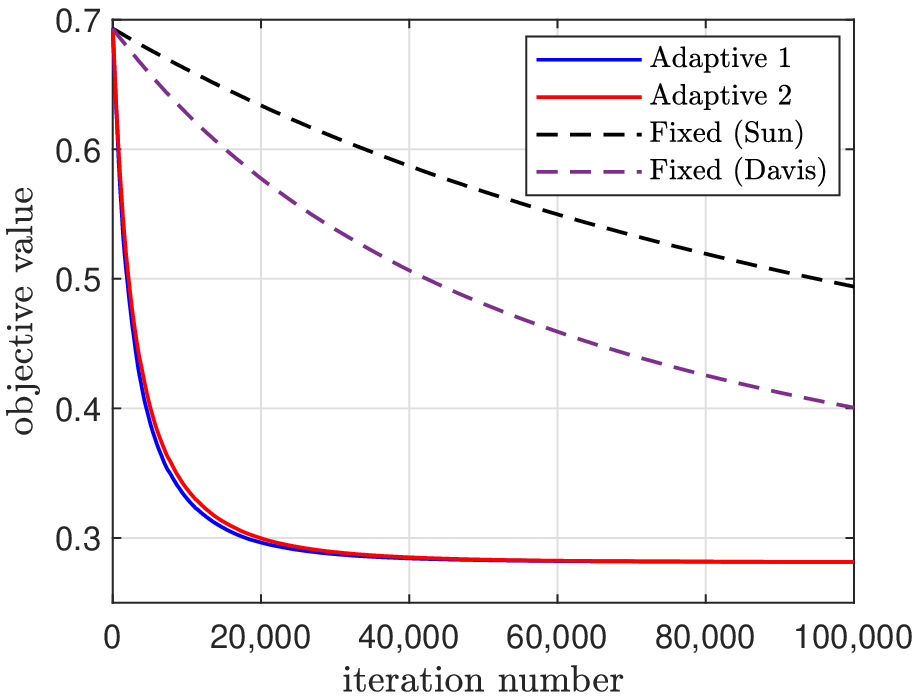}\label{fig:BCDrcv}}~~
		\subfigure[MNIST]{
    	\includegraphics[scale=0.8]{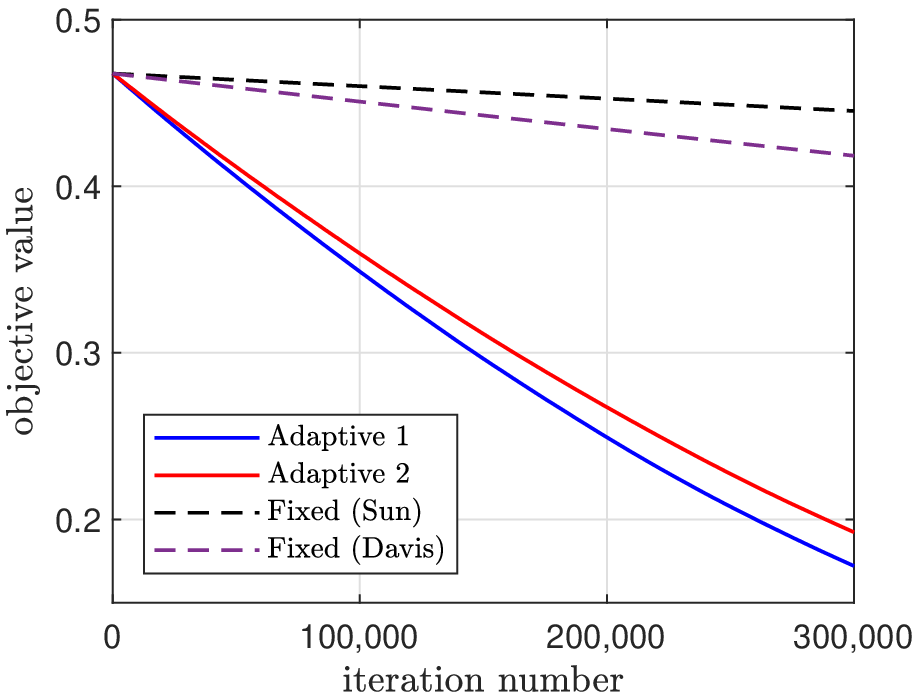}\label{fig:BCDmnist}}
		\label{fig:asyncbcd}
	\end{figure}
	
	Figure \ref{fig:asyncbcd} plots the objective error of Async-BCD with the aforementioned step-sizes. For both datasets, Async-BCD needs substantially longer time to converge under the fixed step-sizes than under the adaptive policies. \deleted{to achieve the same accuracy, Async-BCD with the two fixed step-sizes need a much larger number of iterations compared to Async-BCD with delay-adaptive step-size.} This exhibits once again the advantages of our delay-adaptive step-sizes. The distributions of the generated $\{\tau_k\}$ for the two data sets are plotted in Figure \ref{fig:delay}(b), where the maximal delays are $53$ and $46$ for rcv1 and MNIST, respectively, and are much larger than most $\tau_k$'s (over $97\%$ $\tau_k$'s are smaller than or equal to $20$ for both data sets). Moreover, the workers have different maximal delays varying in $[32,53]$ and $[29,46]$ for rcv1 and MNIST, respectively, due to their different computational capabilities.

	\section{Conclusions}
    We have shown that it is possible to design, implement and analyze asynchronous optimization algorithms that adapt to the true system delays. This is a significant departure from the state-of-the art, that rely on an (often conservative) upper bound of the system delays and use fixed learning rates that are tuned to the worst-case situation. 
    
    Although many of the principles that we have put forward apply to broad classes of algorithms and systems, we have provided detailed treatments of two specific algorithms: PIAG and Async-BCD. Explicit convergence rate bounds and numerical experiments on different data sets and delay traces demonstrate substantial advantages over the state-of-the art. 

    Future work includes developing delay-adaptive step-sizes for other asynchronous algorithms such as Asynchronous SGD \cite{dean2012large,zhang2020taming} and extending the adaptive mechanism to also estimate the Lipschitz constant (and possibly other parameters) on-line.

	\section*{Appendices}
	\appendix
	\section{Proof of Theorem \ref{theo:sequence}}\label{proof:theoseq}
	
	Dividing both sides of \eqref{eq:asynseq} by $Q_{k+1}$ and summing the resulting equation from $k=0$ to $k=K-1$, we obtain
	\begin{equation}\label{eq:sumF0toKminus1}
		\begin{split}
			\frac{V_K}{Q_K}+\sum_{k=1}^K \frac{X_k}{Q_k} \le& V_0 + \sum_{k=0}^{K-1}\left(-\frac{r_k}{Q_{k+1}}W_k+\frac{p_k}{Q_{k+1}}\sum_{\ell=k-\tau_k}^{k-1} W_\ell\right).
		\end{split}
	\end{equation}
	Define $\mathcal{T}_\ell:=\{t\in\mathbb{N}_0: \ell\in [t-\tau_t, t-1]\}$. We have
	\begin{equation}\label{eq:summationtransform}
		\begin{split}
			\sum_{k=0}^{K-1}\frac{p_k}{Q_{k+1}}\sum_{\ell=k-\tau_k}^{k-1} W_\ell &\le\sum_{\ell=0}^{K-2}\left(\sum_{k\in \mathcal{T}_\ell}\frac{p_k}{Q_{k+1}}\right)W_\ell.
		\end{split}
	\end{equation}
	To see \eqref{eq:summationtransform}, note that in the left-hand side, $W_\ell$ occurs only if $\ell\in \{0,1,\ldots,K-2\}$ and $\frac{p_k}{Q_{k+1}}W_\ell$ occurs only if $k\in\mathcal{T}_\ell$. Fix $\ell\in\mathbb{N}_0$. For any $k\in\mathcal{T}_\ell$, because $\ell\in [k-\tau_k,k-1]$, either $p_k=0$ or
	\begin{equation*}
		\frac{p_k}{Q_{k+1}}\le \frac{r_\ell}{Q_{\ell+1}}-\sum_{t=\ell+1}^{k-1} \frac{p_t}{Q_{t+1}}.
	\end{equation*}
	Let $k':=\max\{k\in \mathcal{T}_\ell: p_k>0\}$. By the above equation,
	\begin{equation*}
		\frac{r_\ell}{Q_{\ell+1}}\ge \sum_{t=\ell+1}^{k'} \frac{p_t}{Q_{t+1}} \ge \sum_{t\in \mathcal{T}_\ell} \frac{p_t}{Q_{t+1}} .
	\end{equation*}
	Substituting \eqref{eq:summationtransform} and the above equation into \eqref{eq:sumF0toKminus1} gives
	\begin{equation*}
		\frac{V_K}{Q_K}+\sum_{k=1}^K \frac{X_k}{Q_k}\le V_0-\frac{r_{K-1}}{Q_K}W_{K-1}\le V_0,
	\end{equation*}
	which derives the result.
	
	\section{Proof of Theorem \ref{theo:nonconvexPIAG}}\label{proof:theoPIAG}
	For any $k\in\mathbb{N}$, if $\gamma_{k-1}>0$, then $$\xi_k=-\frac{1}{\gamma_{k-1}}(x_k-x_{k-1})-g_{k-1}.$$ Otherwise, $\xi_k$ can be any subgradient of $R$ at $x_k$. By the first-order optimality condition of \eqref{eq:update},
	\begin{equation}\label{eq:sDefinition}
		\xi_k\in \partial R(x_k),~\forall k\in \mathbb{N}.
	\end{equation}
	The proof mainly uses Theorem \ref{theo:sequence} and the following lemma, which shows that some quantities in PIAG satisfy the asynchronous sequence \eqref{eq:asynseq}.
	\begin{lemma}\label{lemma:PIAGsequence}
		Suppose that all the conditions in Theorem \ref{theo:nonconvexPIAG} hold. Then, the asynchronous sequence \eqref{eq:asynseq} holds with
		\begin{equation*}
			W_k = 
			\begin{cases}
				\frac{1}{\gamma_k}\|x_{k+1}-x_k\|^2, & \gamma_k>0,\\
				0, & \gamma_k=0,
			\end{cases}~~\forall k\in\mathbb{N}_0,
		\end{equation*}
		and
		\begin{enumerate}[(1)]
			\item \textbf{Non-convex}:
			\begin{align*}
				& X_{k+1} = \frac{\gamma_k}{2}\frac{1-h}{h^2-h+1}\|\nabla f(x_{k+1})+\xi_{k+1}\|^2,\\
				& V_k = P(x_k)-P^\star,~p_k=\frac{\gamma_k hL}{2},\\
				& q_k=1,~r_k = \frac{h^2}{2}-p_k.
			\end{align*}
			\item \textbf{Convex}: If each $f^{(i)}$ is convex, then
			\begin{align*}
				& X_k = 0,~V_k = a_k(P(x_k)-P^\star)+\frac{1}{2}\|x_k-x^\star\|^2,\\
				& p_k = \frac{\gamma_k}{2}(a_kL+1),~r_k=\frac{a_k}{2}-p_k,~q_k=1,
			\end{align*}
			where $a_k=\frac{h(h+1)}{L(1-h)}+\sum_{\ell=0}^{k-1} \gamma_\ell$.
			\item\textbf{PL}: If $P$ satisfies the proximal PL-condition \eqref{eq:proximalPLcond}, then
			\begin{align*}
				& V_k = P(x_k)-P^\star,~q_k=\frac{1}{1+\frac{c\sigma(1-\tilde{h})}{\tilde{h}^2-\tilde{h}+1}\gamma_k},\\
				& X_{k+1} = 0,~p_k=\frac{\gamma_k\tilde{h}L}{2},~r_k = \frac{q_k\tilde{h}^2}{2}-\frac{\gamma_k\tilde{h}L}{2},
			\end{align*}
			where $\tilde{h} = \frac{1+h}{2}$ and $c = \min\left(1,\frac{1-h}{2h}\frac{L}{\sigma}\right)$.
		\end{enumerate}
		In all the three cases, either $p_k=0$ or \eqref{eq:generalprincipal} holds.
	\end{lemma}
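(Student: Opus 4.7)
The proof plan is to derive, for each of the three cases, a one-step inequality in the form of \eqref{eq:asynseq} and then verify the principal condition \eqref{eq:generalprincipal} by invoking the step-size rule \eqref{eq:commonstepsizerule}. Three common ingredients drive the argument. First, $L$-smoothness of $f$ and convexity of $R$ combined with the proximal optimality condition $\xi_{k+1} = -\gamma_k^{-1}(x_{k+1}-x_k) - g_k$ (when $\gamma_k>0$) give the descent inequality
\begin{equation*}
P(x_{k+1}) \le P(x_k) + \langle \nabla f(x_k) - g_k,\, x_{k+1}-x_k\rangle - \bigl(\tfrac{1}{\gamma_k} - \tfrac{L}{2}\bigr)\|x_{k+1}-x_k\|^2.
\end{equation*}
Second, the gradient error is controlled block-by-block using $L_i$-smoothness of each $f^{(i)}$ and Jensen, yielding $\|\nabla f(x_k)-g_k\|^2 \le \tfrac{1}{n}\sum_i L_i^2 \|x_k-x_{k-\tau_k^{(i)}}\|^2$. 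Third, a Cauchy--Schwarz telescoping gives $\|x_k - x_{k-\tau_k^{(i)}}\|^2 \le \bigl(\sum_{\ell=k-\tau_k^{(i)}}^{k-1}\gamma_\ell\bigr) \sum_{\ell=k-\tau_k^{(i)}}^{k-1} W_\ell$, which together with \eqref{eq:commonstepsizerule} delivers the crucial asynchrony bound
\begin{equation*}
\|\nabla f(x_k) - g_k\|^2 \,\le\, L^2 \gamma' \sum_{\ell=k-\tau_k}^{k-1} W_\ell.
\end{equation*}

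For the non-convex case I would then apply Young's inequality $\langle \nabla f(x_k)-g_k, x_{k+1}-x_k\rangle \le \tfrac{\gamma_k}{2}\|\nabla f(x_k)-g_k\|^2 + \tfrac{1}{2}W_k$ and extract $\|\nabla f(x_{k+1}) + \xi_{k+1}\|^2$ by writing it as $[\nabla f(x_{k+1}) - \nabla f(x_k)] + [\nabla f(x_k) - g_k] - \gamma_k^{-1}(x_{k+1}-x_k)$ and invoking $L$-smoothness. Careful bookkeeping produces $V_k = P(x_k) - P^\star$, $p_k = \gamma_k hL/2$, and $r_k = h^2/2 - p_k$. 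Since $q_k = 1$ here, the condition \eqref{eq:generalprincipal} simplifies to $\sum_{t=\ell}^{k}\gamma_t \le h/L = \gamma'$ for all $\ell \ge k-\tau_k$, which is immediate from \eqref{eq:commonstepsizerule}.

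For the convex case I would exploit convexity of each $f^{(i)}$ to compare against $x^\star$ via the standard identity $\|x_{k+1}-x^\star\|^2 - \|x_k-x^\star\|^2 = -2\gamma_k\langle g_k, x_k-x^\star\rangle + 2\gamma_k\langle \xi_{k+1}, x^\star - x_{k+1}\rangle + \|x_{k+1}-x_k\|^2 + $ asynchrony cross-terms. The specific weight $a_k = h(h+1)/[L(1-h)] + \sum_{\ell<k}\gamma_\ell$ is chosen so that the $P(x_k) - P^\star$ contributions telescope correctly into $V_k = a_k(P(x_k)-P^\star) + \tfrac{1}{2}\|x_k-x^\star\|^2$, while the extra $a_k L$ factor in $p_k = \tfrac{\gamma_k}{2}(a_kL + 1)$ absorbs the gradient-error term of size $L^2\gamma'$. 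Verification of \eqref{eq:generalprincipal} again reduces to a bound of the type $\sum_{t=\ell}^k\gamma_t \le \gamma'$ after exploiting the monotonicity of $a_k$. The PL case proceeds analogously, but uses the proximal PL inequality to convert the per-step decrease into a multiplicative contraction factor $q_k = 1/(1 + c\sigma(1-\tilde h)\gamma_k/(\tilde h^2-\tilde h+1))$; the shift $\tilde h = (1+h)/2$ deliberately leaves headroom between $\gamma' = h/L$ and the ``tight'' bound $\tilde h/L$ so that the asynchrony error can still be dominated by the PL-based contraction.

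The most delicate step, and what I expect to be the main obstacle, is the verification of \eqref{eq:generalprincipal} in the PL case, because the non-trivial $q_k$ makes $Q_{k+1} = \prod_{j\le k}q_j$ enter the ratios $p_t/Q_{t+1}$ and $r_\ell/Q_{\ell+1}$. After rearrangement the condition becomes $\sum_{t=\ell}^{k}\gamma_t \prod_{j=\ell+1}^{t}q_j^{-1} \le $ constant, which is not quite \eqref{eq:commonstepsizerule}. One must exploit both the specific form of $q_k$ and the slack built into the choice of $\tilde h$ and $c = \min(1, (1-h)L/(2h\sigma))$ to reduce this back to an inequality implied by \eqref{eq:commonstepsizerule}. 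Once this is done in each case, Theorem~\ref{theo:sequence} applies and the three statements of Theorem~\ref{theo:nonconvexPIAG} follow directly from \eqref{eq:Vconverge}--\eqref{eq:summableX}.
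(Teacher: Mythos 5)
Your macro-plan (one-step descent inequality, $L_i$-smoothness plus Cauchy--Schwarz for the staleness error, then verify \eqref{eq:generalprincipal} from \eqref{eq:commonstepsizerule}) matches the paper's, your gradient-error bound $\|\nabla f(x_k)-g_k\|^2\le hL\sum_\ell W_\ell$ is the same as the paper's since $L^2\gamma'=hL$, and your convex-case sketch is essentially what the paper does. But there is a genuine gap in the non-convex case, which also propagates to the PL case since the paper obtains the PL recursion by applying the non-convex inequality \eqref{eq:nonconvexcasePIAGseq} with $h$ replaced by $\tilde h$.

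The problem is the Young's inequality step. The paper does not apply Young's inequality to $\langle\nabla f(x_k)-g_k,\,x_{k+1}-x_k\rangle$; it uses the exact quadratic identity, which produces not only $\frac{\gamma_k}{2}\|\nabla f(x_k)-g_k\|^2+\frac1{2\gamma_k}\|x_{k+1}-x_k\|^2$ but also the negative term $-\frac{\gamma_k}{2}\|\nabla f(x_k)+\xi_{k+1}\|^2$. That term is exactly what gets converted into $-X_{k+1}$ through the two-term AM--GM bound $\|\nabla f(x_{k+1})+\xi_{k+1}\|^2\le(1+\eta)\|\nabla f(x_k)+\xi_{k+1}\|^2+(1+1/\eta)L^2\|x_{k+1}-x_k\|^2$ with $\eta=h^2/(1-h)$. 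Your Young's inequality discards it, and your proposed alternative of expanding $\nabla f(x_{k+1})+\xi_{k+1}=(\nabla f(x_{k+1})-\nabla f(x_k))+(\nabla f(x_k)-g_k)-\gamma_k^{-1}(x_{k+1}-x_k)$ cannot repair the loss: it reinjects $\|\nabla f(x_k)-g_k\|^2$ with a second positive coefficient. After Young's inequality the $\|\nabla f(x_k)-g_k\|^2$ coefficient is already the full $\gamma_k/2$ compatible with $p_k=\gamma_khL/2$; trading it off via a weighted Young parameter $\epsilon<\gamma_k$ drives the $W_k$ coefficient down, and one can check the two constraints (total $\|\nabla f(x_k)-g_k\|^2$ weight $\le\gamma_k/2$ and residual $W_k$ weight $\ge h^2/2-\gamma_khL/2$) become jointly infeasible for every $h\in(0,1)$. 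So for cases (1) and (3) you must keep the exact expansion; case (2) is unaffected because it sets $X_k=0$ and simply drops the $-\frac{\gamma_k}{2}\|\nabla f(x_k)+\xi_{k+1}\|^2$ term. Your identification of the product-of-$q_t$ estimate in the PL verification of \eqref{eq:generalprincipal} as the delicate step is correct, and your proposed route there matches the paper.
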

	
	Using Lemma \ref{lemma:PIAGsequence} and Theorem \ref{theo:sequence}, the result on nonconvex and convex case in Theorem \ref{theo:nonconvexPIAG} is straightforward. To see the proximal-PL case in Theorem \ref{theo:nonconvexPIAG}, note that because $c\le 1$, $\gamma_k\le \gamma'=\frac{h}{L}\le \frac{\tilde{h}}{L}$, and $\sigma\le L$,
	\begin{equation*}
	    \frac{c\sigma(1-\tilde{h})}{\tilde{h}^2-\tilde{h}+1}\gamma_k\le\frac{c\sigma}{L}\frac{\tilde{h}(1-\tilde{h})}{\tilde{h}^2-\tilde{h}+1}\le \frac{\tilde{h}(1-\tilde{h})}{\tilde{h}^2-\tilde{h}+1}\le \frac{1}{3}.
	\end{equation*}
	In addition, for any $\epsilon\in (0,1/3]$, $\frac{1}{1+\epsilon}\le 1-\frac{3}{4}\epsilon\le e^{-\frac{3}{4}\epsilon}$. Therefore,
	\begin{equation*}
	    \frac{1}{1+\frac{c\sigma(1-\tilde{h})}{\tilde{h}^2-\tilde{h}+1}\gamma_k}\le e^{-\frac{3}{4}\frac{c\sigma(1-\tilde{h})}{\tilde{h}^2-\tilde{h}+1}\gamma_k},
	\end{equation*}
	which further gives
	\begin{equation*}
	    Q_k\le e^{-\frac{3c\sigma(1-\tilde{h})}{4(\tilde{h}^2-\tilde{h}+1)}\sum_{t=0}^{k-1}\gamma_t}.
	\end{equation*}
	Using the above equation and \eqref{eq:Vconverge}, we obtain the result.
	
	\subsection{Proof of Lemma \ref{lemma:PIAGsequence}}
	When $\gamma_k = 0$, $p_k=0$ in all three cases. Below, we assume $\gamma_k>0$ and prove \eqref{eq:generalprincipal} in all the three cases.
	
	\subsubsection{Proof of the nonconvex case}
	We first prove that for any $k\in\mathbb{N}_0$,
	\begin{equation}\label{eq:alphabetadef}
		\begin{split}
			P(x_{k+1})-P^\star-(P(x_k)-P^\star) \le&\frac{1}{2}\gamma_khL\sum_{j=k-\tau_k}^{k-1}W_j-\frac{\gamma_k}{2}\|\nabla f(x_k)+\xi_{k+1}\|^2 -\frac{1-\gamma_kL}{2}W_k.
		\end{split}
	\end{equation}

	By \eqref{eq:sDefinition} and the convexity of $R$,
	\begin{equation}\label{eq:Rconvex}
		R(x_{k+1})-R(x_k) \le \langle \xi_{k+1}, x_{k+1}-x_k\rangle.
	\end{equation}
	Moreover, $f$ is $L-$smooth due to the $L_i$-smoothness of each $f^{(i)}$. Then,
	\begin{equation}\label{eq:Fsmooth}
		\begin{split}
			f(x_{k+1})-f(x_k)\le&\langle \nabla f(x_k), x_{k+1}-x_k\rangle+\frac{L}{2}\|x_{k+1}-x_k\|^2.
		\end{split}
	\end{equation}
	By adding \eqref{eq:Rconvex} and \eqref{eq:Fsmooth}, we have
	\begin{equation}\label{eq:Pdescent}
		\begin{split}
			P(x_{k+1})-P(x_k) \le\langle \nabla f(x_k)+\xi_{k+1}, x_{k+1}-x_k\rangle+\frac{L}{2}\|x_{k+1}-x_k\|^2,
		\end{split}
	\end{equation}
	where
	\begin{equation}\label{eq:innerproductexpansion}
		\begin{split}
			&\langle \nabla f(x_k)+\xi_{k+1}, x_{k+1}-x_k\rangle= \gamma_k\langle \nabla f(x_k)+\xi_{k+1}, \frac{1}{\gamma_k}(x_{k+1}-x_k)\rangle\\
			=& \frac{\gamma_k}{2}\|\nabla f(x_k)+\xi_{k+1}+\frac{1}{\gamma_k}(x_{k+1}-x_k)\|^2-\frac{\gamma_k}{2}\|\nabla f(x_k)+\xi_{k+1}\|^2-\frac{1}{2\gamma_k}\|x_{k+1}-x_k\|^2.
		\end{split}
	\end{equation}
	From the definition of $\xi_{k+1}$, we have
	\begin{equation*}
		\nabla f(x_k)+\xi_{k+1}+\frac{1}{\gamma_k}(x_{k+1}-x_k) = \nabla f(x_k)-g_k.
	\end{equation*}
	Substituting this equation into \eqref{eq:innerproductexpansion} yields
	\begin{equation}\label{eq:pk1pkboundedbygraFgdiff}
		\begin{split}
			\langle \nabla f(x_k)+\xi_{k+1}, x_{k+1}-x_k\rangle\le& \frac{\gamma_k}{2}\|\nabla f(x_k)-g_k\|^2-\frac{\gamma_k}{2}\|\nabla f(x_k)+\xi_{k+1}\|^2-\frac{1}{2\gamma_k}\|x_{k+1}-x_k\|^2.
		\end{split}
	\end{equation}
	By the $L_i$-smoothness of each $f^{(i)}$ and the definition of $g_k$,
	\begin{equation}\label{eq:adaptdiffupperbound}
		\begin{split}
			\|\nabla f(x_k)- g_k\|^2&= \|\frac{1}{n}\sum_{i=1}^n (\nabla f^{(i)}(x_k) - \nabla f^{(i)}(x_{k-\tau_k^{(i)}}))\|^2\\
			&= \frac{1}{n^2}\|\sum_{i=1}^n (\nabla f^{(i)}(x_k) - \nabla f^{(i)}(x_{k-\tau_k^{(i)}}))\|^2\\
			&\le \frac{1}{n}\sum_{i=1}^n \|\nabla f^{(i)}(x_k) - \nabla f^{(i)}(x_{k-\tau_k^{(i)}})\|^2\\
			&\le \frac{1}{n}\sum_{i=1}^n L_i^2\|x_k-x_{k-\tau_k^{(i)}}\|^2\\
			&= \frac{1}{n}\sum_{i=1}^n L_i^2\|\sum_{j=k-\tau_k^{(i)}}^{k-1}(x_{j+1}-x_j)\|^2.
		\end{split}
	\end{equation}
	In addition,
	\begin{equation}\label{eq:adaptgammagjsj}
		\begin{split}
			\|\sum_{j=k-\tau_k^{(i)}}^{k-1}(x_{j+1}-x_j)\|^2\le& (\sum_{j=k-\tau_k^{(i)}}^{k-1}\|x_{j+1}-x_j\|)^2= (\sum_{j=k-\tau_k^{(i)}}^{k-1} \sqrt{\gamma_jW_j})^2\allowdisplaybreaks\\
			\le& (\sum_{j=k-\tau_k^{(i)}}^{k-1}\sqrt{\gamma_j}^2)(\sum_{j=k-\tau_k^{(i)}}^{k-1} \sqrt{W_j}^2)\allowdisplaybreaks\\
			= &(\sum_{j=k-\tau_k^{(i)}}^{k-1} \gamma_j)(\sum_{j=k-\tau_k^{(i)}}^{k-1}W_j) \le\frac{h}{L}\sum_{j=k-\tau_k}^{k-1}W_j,
		\end{split}
	\end{equation}
	where the second inequality is due to the Cauchy–Schwarz inequality and the last step is due to $\tau_k^{(i)}\le \tau_k$, \eqref{eq:commonstepsizerule} with $\gamma'=\frac{h}{L}$, and $\gamma_k>0$. By \eqref{eq:adaptdiffupperbound}, \eqref{eq:adaptgammagjsj}, and $\frac{1}{n}\sum_{i=1}^n L_i^2 = L^2$,
	\begin{equation*}
		\begin{split}
			&\|\nabla f(x_k)- g_k\|^2 \le hL\sum_{j=k-\tau_k}^{k-1}W_j.
		\end{split}
	\end{equation*}
	Combining the above equation with \eqref{eq:pk1pkboundedbygraFgdiff} and \eqref{eq:Pdescent} yields \eqref{eq:alphabetadef}.

	Next, we use \eqref{eq:alphabetadef} to derive \eqref{eq:asynseq}. The equation \eqref{eq:alphabetadef} can be rewritten as
	\begin{equation}\label{eq:newalphabetadef}
		\begin{split}
			&P(x_{k+1})-P^\star-(P(x_k)-P^\star)\\
			\le&\frac{1}{2}\gamma_khL\sum_{j=k-\tau_k}^{k-1}W_j-\frac{\gamma_k}{2}\|\nabla f(x_k)+\xi_{k+1}\|^2-\frac{1-h^2-(1-h)\gamma_kL}{2}W_k-\frac{h^2-\gamma_khL}{2}W_k.
		\end{split}
	\end{equation}

	Because of \eqref{eq:commonstepsizerule} with $\gamma'=\frac{h}{L}$, we have $\gamma_k\le \frac{h}{L}$ and
	\begin{equation}\label{eq:nablaFpluss}
		\begin{split}
			&\frac{\gamma_k}{2}\|\nabla f(x_k)\!+\!\xi_{k+1}\|^2\!+\!\frac{1-h^2\!-\!(1-h)\gamma_kL}{2}W_k\allowdisplaybreaks\\
			=& \frac{\gamma_k}{2}(\|\nabla f(x_k)+\xi_{k+1}\|^2+\frac{1-h^2-(1-h)\gamma_kL}{\gamma_k^2L^2}L^2\|x_{k+1}-x_k\|^2)\allowdisplaybreaks\\
			\ge&\frac{\gamma_k}{2}(\|\nabla f(x_k)\!+\!\xi_{k+1}\|^2\!+\!\frac{1\!-\!h}{h^2}L^2\|x_{k+1}\!-\!x_k\|^2),
		\end{split}
	\end{equation}
	where the last step is due to $1-h^2-(1-h)\gamma_k L\ge 1-h^2-(1-h)h=1-h>0$ and $\gamma_k^2L^2\le h^2$.
	By the $L$-smoothness of $f$ and the AM-GM inequality, for any $\eta>0$,
	\begin{equation*}
		\begin{split}
			&\|\nabla f(x_{k+1})+\xi_{k+1}\|^2\\
			=&\|(\nabla f(x_{k+1})-\nabla f(x_k))+(\nabla f(x_k)+\xi_{k+1})\|^2\\
			\le& (1+\eta)\|\nabla f(x_k)+\xi_{k+1}\|^2+(1+1/\eta)\|\nabla f(x_{k+1})-\nabla f(x_k)\|^2\\
			\le& (1+\eta)\|\nabla f(x_k)+\xi_{k+1}\|^2+(1+1/\eta)L^2\|x_{k+1}-x_k\|^2.
		\end{split}
	\end{equation*}
	Letting $\eta = \frac{h^2}{1-h}$ in the above equation, we have
	\begin{equation*}
		\begin{split}
			&\frac{1-h}{h^2-h+1}\|\nabla f(x_{k+1})+\xi_{k+1}\|^2\le\|\nabla f(x_k)+\xi_{k+1}\|^2+\frac{(1-h)L^2}{h^2}\|x_{k+1}-x_k\|^2,
		\end{split}
	\end{equation*}
	which, together with \eqref{eq:nablaFpluss} and \eqref{eq:newalphabetadef}, gives
	\begin{equation}\label{eq:nonconvexcasePIAGseq}
	\begin{split}
	    	&P(x_{k+1})-P^\star-(P(x_k)-P^\star)\\
			\le&\frac{1}{2}\gamma_khL\sum_{j=k-\tau_k}^{k-1}W_j-\frac{h^2-\gamma_khL}{2}W_k-\frac{\gamma_k}{2}\frac{1-h}{h^2-h+1}\|\nabla f(x_{k+1})+\xi_{k+1}\|^2,
	\end{split}    
	\end{equation}
	i.e., \eqref{eq:asynseq} holds.
	
	Finally, it is straightforward to see that \eqref{eq:commonstepsizerule} with $\gamma'=h/L$ guarantees \eqref{eq:generalprincipal}.
	
	\subsubsection{Proof of the convex case}
	Define $a_0=\frac{h(h+1)}{L(1-h)}$ and $a_k=a_0+\sum_{\ell=0}^{k-1} \gamma_\ell$ for all $k\in\mathbb{N}$. Multiplying both sides of \eqref{eq:alphabetadef} by $a_k$ and using $h<1$ gives
	\begin{equation}\label{eq:convexalphabetadef}
		\begin{split}
			a_k(P(x_{k+1})-P^\star)-a_k(P(x_k)-P^\star)\le &\frac{a_k\gamma_kL}{2}\sum_{j=k-\tau_k}^{k-1}W_j-\frac{a_k(1-\gamma_kL)}{2}W_k.
		\end{split}
	\end{equation}
	In addition, using a similar derivation of equation (47) in \cite{Feyzmahdavian21}, we have
	\begin{equation*}
		\begin{split}
			&\gamma_k(P(x_{k+1})-P^\star)\le \gamma_k\sum_{i=1}^n\frac{L_i}{2n}\|x_k-x_{k-\tau_k^{(i)}}\|^2+\frac{1}{2}(\|x_k-x^\star\|^2-\|x_{k+1}-x^\star\|^2-\|x_{k+1}-x_k\|^2),
		\end{split}
	\end{equation*}
	which, together with \eqref{eq:adaptgammagjsj}, $h< 1$, and $\frac{1}{n}\sum_{i=1}^n L_i \le \sqrt{\frac{1}{n}\sum_{i=1}^n L_i^2} = L$, leads to
	\begin{equation}\label{eq:gammaFxkminusFstar}
		\begin{split}
			&\gamma_k(P(x_{k+1})-P^\star)\le \frac{1}{2}\gamma_k\sum_{j=k-\tau_k}^{k-1}W_j-\frac{1}{2}\gamma_k W_k+\frac{1}{2}(\|x_k-x^\star\|^2-\|x_{k+1}-x^\star\|^2).
		\end{split}
	\end{equation}
	Adding \eqref{eq:gammaFxkminusFstar} with \eqref{eq:convexalphabetadef} gives
	\begin{equation*}
		\begin{split}
			&a_{k+1}(P(x_{k+1})-P^\star)+\frac{1}{2}\|x_{k+1}-x^\star\|^2-(a_k(P(x_k)-P^\star)+\frac{1}{2}\|x_k-x^\star\|^2)\\
			\le& \frac{\gamma_k}{2}(a_kL+1)\sum_{j=k-\tau_k}^{k-1}W_j-\frac{1}{2}(a_k(1-\gamma_k L)+\gamma_k)W_k\\
			=& \frac{\gamma_k}{2}(a_kL+1)\sum_{j=k-\tau_k}^{k-1}W_j-\frac{1}{2}(a_k-\gamma_k(a_kL+1))W_k,
		\end{split}
	\end{equation*}
	i.e., \eqref{eq:asynseq} holds.
	
	Finally, we prove \eqref{eq:generalprincipal}. In this case, \eqref{eq:generalprincipal} reduces to
	\begin{equation*}
		\sum_{j=\ell}^k \gamma_j(a_jL+1)\le a_\ell, \forall \ell\in [k-\tau_k, k],\quad\forall k\in\mathbb{N}_0.
	\end{equation*}
	Since $a_k$ is monotonically non-decreasing, the above equation is equivalent to
	\begin{equation}\label{eq:PIAGconvexinproofprincipal}
		\sum_{j=k-\tau_k}^k \gamma_j(a_jL+1)\le a_{k-\tau_k},\quad\forall k\in\mathbb{N}_0.
	\end{equation}
	Since for each $j\in [k-\tau_k,k]$, $a_j\le a_k = a_{k-\tau_k}+\sum_{j=k-\tau_k}^{k-1} \gamma_t\le a_{k-\tau_k}+\frac{h}{L}$ by \eqref{eq:commonstepsizerule} and $\gamma'=\frac{h}{L}$, we have
	\begin{equation*}
		\begin{split}
			\sum_{j=k-\tau_k}^k \gamma_j(a_jL+1) &\le \frac{h(a_{k-\tau_k}L+h+1)}{L}.
		\end{split}
	\end{equation*}
	Moreover, since $a_{k-\tau_k}\ge a_0$ and $\frac{La_{k-\tau_k}}{h(a_{k-\tau_k}L+h+1)}$ increases at $a_{k-\tau_k}$,
	\begin{equation*}
		\frac{La_{k-\tau_k}}{h(a_{k-\tau_k}L+h+1)}\ge \frac{La_0}{h(a_0L+h+1)} = 1.
	\end{equation*}
	Combining the above two equations, we have \eqref{eq:PIAGconvexinproofprincipal}, so that \eqref{eq:generalprincipal} also holds.
	
	\subsubsection{Proof of the proximal PL case}
	By Appendix G in \cite{karimi16}, \eqref{eq:proximalPLcond} is equivalent to:
	\begin{equation}\label{eq:equivPL}
		\sigma(P(x)-P^\star)\le \frac{\|s\|^2}{2},~\forall s\in\partial P(x),~\forall x\in\R^d.
	\end{equation}
	Because $\sum_{t=k-\tau_k}^k \gamma_t\le \frac{h}{L}\le \frac{\tilde{h}}{L}$, \eqref{eq:nonconvexcasePIAGseq} with $h$ being replaced by $\tilde{h}$ also holds by its derivation, which, together with \eqref{eq:equivPL} and $c\le 1$, yields
	\begin{equation}\label{eq:PIAGPLproof}
		\begin{split}
			&(1\!+\!\frac{c\sigma(1\!-\!\tilde{h})\gamma_k}{\tilde{h}^2\!-\!\tilde{h}\!+\!1})(P(x_{k+1})\!-\!P^\star)\!-\!(P(x_k)-P^\star)\\
			&\le\frac{1}{2}\gamma_k\tilde{h}L\sum_{j=k-\tau_k}^{k-1}W_j-\frac{\tilde{h}^2-\gamma_k\tilde{h}L}{2}W_k.
		\end{split}
	\end{equation}
	
	Dividing both sides of \eqref{eq:PIAGPLproof} by $1+\frac{c\sigma(1-\tilde{h})\gamma_k}{\tilde{h}^2-\tilde{h}+1}$ ensures
	\begin{equation*}
		\begin{split}
			&P(x_{k+1})-P^\star - \frac{P(x_k)-P^\star}{1+\frac{c\sigma(1-\tilde{h})\gamma_k}{\tilde{h}^2-\tilde{h}+1}}\\
			\le & \frac{\frac{1}{2}\gamma_k\tilde{h}L\sum_{j=k-\tau_k}^{k-1}W_j-\frac{\tilde{h}^2}{2}W_k+\frac{\gamma_k\tilde{h}L}{2}W_k}{1+\frac{c\sigma(1-\tilde{h})\gamma_k}{\tilde{h}^2-\tilde{h}+1}}\\
			\le & \frac{1}{2}\gamma_k\tilde{h}L\sum_{j=k-\tau_k}^{k-1}W_j+(\frac{\gamma_k\tilde{h}L}{2}-\frac{\tilde{h}^2}{2(1+\frac{c\sigma(1-\tilde{h})\gamma_k}{\tilde{h}^2-\tilde{h}+1})})W_k,
		\end{split}
	\end{equation*}
	i.e., \eqref{eq:asynseq} holds.
	
	Below, we derive \eqref{eq:generalprincipal}. Since $Q_{k+1}\le Q_{t+1}$ $\forall t\in [k-\tau_k, k]$, \eqref{eq:generalprincipal} can be guaranteed by
	\begin{equation}\label{eq:PIAGPLgeneralprincipal}
		\sum_{t=\ell+1}^k p_t\le r_\ell\frac{Q_{k+1}}{Q_{\ell+1}},~ \forall \ell\in [k-\tau_k, k], \quad\forall k\in\mathbb{N}_0.
	\end{equation}
	Note that because $Q_{k+1}\le Q_{\ell+1}$,
	\begin{equation}\label{eq:PIAGrktimesQ}
		\begin{split}
			r_\ell\frac{Q_{k+1}}{Q_{\ell+1}} &= \frac{Q_{k+1}}{Q_{\ell+1}}(\frac{\tilde{h}^2q_\ell}{2}-\frac{\gamma_\ell\tilde{h}L}{2})\\
			&\ge \frac{Q_{k+1}}{Q_{\ell+1}}\frac{\tilde{h}^2q_\ell}{2}-\frac{\gamma_\ell\tilde{h}L}{2}\\
			&\ge (\Pi_{t=\ell}^k q_t)\frac{\tilde{h}^2}{2}-\frac{\gamma_\ell\tilde{h}L}{2}\\
			&\ge (\Pi_{t=k-\tau_k}^k q_t)\frac{\tilde{h}^2}{2}-\frac{\gamma_\ell\tilde{h}L}{2}.
		\end{split}
	\end{equation}
	Because $\sigma\le L$, $\tilde{h} = \frac{1+h}{2}$, $c\le\frac{1-h}{2h}\frac{L}{\sigma}$, and because of \eqref{eq:commonstepsizerule} and $\gamma'=\frac{h}{L}$, we have
	\begin{equation*}
		\sum_{t=k-\tau_k}^k \gamma_t\le \frac{h}{L} \le \frac{\tilde{h}}{L+c\sigma},
	\end{equation*}
	and therefore,
	\begin{equation}\label{eq:PIAGproductofqt}
		\begin{split}
			\Pi_{t=k-\tau_k}^k q_t &= \Pi_{t=k-\tau_k}^k \frac{1}{1+\frac{c\sigma(1-\tilde{h})\gamma_t}{\tilde{h}^2-\tilde{h}+1}}\\
			&\ge \Pi_{t=k-\tau_k}^k(1-\frac{c\sigma(1-\tilde{h})\gamma_t}{\tilde{h}^2-\tilde{h}+1})\\
			&\ge 1-\sum_{t=k-\tau_k}^k \frac{c\sigma(1-\tilde{h})\gamma_t}{\tilde{h}^2-\tilde{h}+1}\\
			&\ge 1-\frac{c\sigma}{L+c\sigma}\frac{\tilde{h}(1-\tilde{h})}{\tilde{h}^2-\tilde{h}+1}\\
			&\ge \frac{L}{L+c\sigma},
		\end{split}
	\end{equation}
	where the last step is due to $\frac{\tilde{h}(1-\tilde{h})}{\tilde{h}^2-\tilde{h}+1}\le 1$ because of $\tilde{h}\in (0,1)$. By \eqref{eq:PIAGrktimesQ}, \eqref{eq:PIAGproductofqt}, \eqref{eq:commonstepsizerule}, $\gamma'=h/L$, and $c\le \frac{1-h}{2h}\frac{L}{\sigma}$, for any $\ell\in [k-\tau_k, k]$,
	\begin{equation*}
		\begin{split}
			\sum_{t=\ell+1}^k p_t-r_\ell\frac{Q_{k+1}}{Q_{\ell+1}}\le& \frac{\tilde{h}L}{2}\sum_{t=\ell}^k \gamma_t - \frac{1}{2}\frac{L\tilde{h}^2}{L+c\sigma}\le 0,
		\end{split}
	\end{equation*}
	i.e., \eqref{eq:PIAGPLgeneralprincipal} holds, which guarantees \eqref{eq:generalprincipal}.

	\section{Proof of Theorem \ref{theo:coor}}\label{proof:theocoor}

	The proof mainly uses Theorem \ref{theo:sequence} and following Lemma, which indicates that some quantities produced by Async-BCD satisfy \eqref{eq:asynseq}.
	\begin{lemma}\label{lemma:ARock}
		Suppose that all the conditions in Theorem \ref{theo:coor} hold. Then, \eqref{eq:asynseq} holds with
		\begin{align*}
			& W_k = 
			\begin{cases}
				\frac{1}{\gamma_k}E[\|x_{k+1}-x_k\|^2], & \gamma_k>0,\\
				0, & \gamma_k=0,
			\end{cases},\\
			& X_{k+1} \!=\!\frac{\gamma_k(1-h)}{4m}E[\|\tilde{\nabla} P(x_k)\|^2],\\
			& V_k\!=\!E[P(x_k)]\!-\!P^\star,~p_k\!=\!\frac{\hat{L}\gamma_k}{2},~q_k=1,~r_k = \frac{h}{2}\!-\!p_k.
		\end{align*}
		In addition, \eqref{eq:generalprincipal} holds.
	\end{lemma}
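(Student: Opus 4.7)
The plan is to mirror the structure of Lemma~\ref{lemma:PIAGsequence}, adapted to the block-coordinate setting. The two new ingredients are (i) replacing the delayed aggregated gradient by the inconsistent-read expansion \eqref{eq:inconsistentreadhatx}, and (ii) converting the per-block update size to the full gradient mapping $\tilde{\nabla}P(x_k)$ by averaging over the uniformly random block index $j\in[m]$. Throughout I condition on the history up to iteration $k$ and take expectation over $j$ only at the end. The case $\gamma_k=0$ is trivial, so I assume $\gamma_k>0$.

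First I derive a per-iteration descent inequality. Let $j$ be the block chosen at iteration $k$. The first-order optimality condition for \eqref{eq:asyncor} supplies $\xi_{k+1}:=-\tfrac{1}{\gamma_k}(x_{k+1}^{(j)}-x_k^{(j)})-\nabla_j f(\hat{x}_k)\in\partial R^{(j)}(x_{k+1}^{(j)})$. Combining convexity of $R^{(j)}$ with the block-$\hat{L}$-smoothness from Assumption~\ref{asm:sc} (which, since only block $j$ moves, yields $f(x_{k+1})-f(x_k)\le\langle\nabla_j f(x_k),x_{k+1}^{(j)}-x_k^{(j)}\rangle+\tfrac{\hat{L}}{2}\|x_{k+1}^{(j)}-x_k^{(j)}\|^2$), and completing the square as in \eqref{eq:innerproductexpansion}--\eqref{eq:pk1pkboundedbygraFgdiff}, I obtain (after discarding the non-positive $-\tfrac{\gamma_k}{2}\|\nabla_j f(x_k)+\xi_{k+1}\|^2$ term)
\[
P(x_{k+1})-P(x_k)\le \tfrac{\gamma_k}{2}\|\nabla_j f(x_k)-\nabla_j f(\hat{x}_k)\|^2-\tfrac{1-\gamma_k\hat{L}}{2\gamma_k}\|x_{k+1}^{(j)}-x_k^{(j)}\|^2.
\]
The inconsistent-read expansion \eqref{eq:inconsistentreadhatx} writes $x_k-\hat{x}_k$ as a sum of single-block updates $x_{t+1}-x_t$ with $t\in J_k\subseteq[k-\tau_k,k-1]$; chaining Assumption~\ref{asm:sc} block by block gives $\|\nabla_j f(x_k)-\nabla_j f(\hat{x}_k)\|\le \hat{L}\sum_{t\in J_k}\|x_{t+1}-x_t\|$ independently of $j$. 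Squaring, applying Cauchy--Schwarz as in \eqref{eq:adaptgammagjsj}, and invoking \eqref{eq:commonstepsizerule} with $\gamma'=h/\hat{L}$ controls this noise by a constant multiple of $\sum_{\ell=k-\tau_k}^{k-1}\|x_{\ell+1}-x_\ell\|^2/\gamma_\ell$.

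It remains to take expectation over $j\in[m]$ and match the claimed $X_{k+1}$, $p_k$, and $r_k$. Uniformity gives $E_j[\|x_{k+1}^{(j)}-x_k^{(j)}\|^2]=\tfrac{1}{m}\|\prox_{\gamma_k R}(x_k-\gamma_k\nabla f(\hat{x}_k))-x_k\|^2$ and the noise term picks up the same $1/m$. To extract $X_{k+1}=\tfrac{\gamma_k(1-h)}{4m}E[\|\tilde{\nabla}P(x_k)\|^2]$ from the residual coefficient $1-\gamma_k\hat{L}\ge 1-h$ of the $W_k$ term, I would (a) use firm non-expansiveness of $\prox_{\gamma_k R}$ together with a Young inequality to swap the inconsistent read $\hat{x}_k$ for $x_k$ at the cost of an extra multiple of $\|\nabla f(x_k)-\nabla f(\hat{x}_k)\|^2$ (itself already bounded by the delay sum above), and (b) invoke the standard monotonicity of $\gamma\mapsto\|x-\prox_{\gamma R}(x-\gamma\nabla f(x))\|/\gamma$, so that for $\gamma_k\le 1/\hat{L}$ one has $\|\prox_{\gamma_k R}(x_k-\gamma_k\nabla f(x_k))-x_k\|\ge\gamma_k\|\tilde{\nabla}P(x_k)\|$. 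The factor $1/4$ then comes from one halving for the Young split and one for the lower bound via the triangle inequality. I expect step~(a) to be the main technical obstacle, since the Young constants must be tuned so that the coefficient of $\sum_\ell W_\ell$ does not inflate past $p_k=\hat{L}\gamma_k/2$. Finally, $q_k\equiv 1$ forces $Q_k\equiv 1$, so condition~\eqref{eq:generalprincipal} reduces to $\sum_{t=\ell}^k\gamma_t\le h/\hat{L}$ for every $\ell\in[k-\tau_k,k]$, which is immediate from \eqref{eq:commonstepsizerule} with $\gamma'=h/\hat{L}$.
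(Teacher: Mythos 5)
Your proposal follows essentially the same route as the paper's proof: per-iteration block descent from Assumption~\ref{asm:sc} and convexity of $R^{(j)}$ followed by completing the square and discarding the nonpositive $\|\nabla_j f(x_k)+\xi_{k+1}\|^2$ term; chaining the inconsistent-read expansion \eqref{eq:inconsistentreadhatx} with Cauchy--Schwarz and the step-size rule to control $\|\nabla_j f(x_k)-\nabla_j f(\hat{x}_k)\|^2$; using $\|a-c\|^2\le 2\|a-b\|^2+2\|b-c\|^2$ plus prox non-expansiveness to swap $\nabla f(\hat{x}_k)$ for $\nabla f(x_k)$ inside the prox-gradient step; invoking the gradient-mapping monotonicity (the paper cites Lemma~1 of \cite{karimi16}) to lower-bound by $\|\tilde{\nabla}P(x_k)\|$; and finally averaging over the uniformly random block, which indeed produces the $1/m$. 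The open item you flag in step~(a) is resolved in the paper exactly as you anticipated: the slack coefficient $\tfrac{1-h}{2\gamma_k}$ in front of $\|x_{k+1}^{(j)}-x_k^{(j)}\|^2$ is traded against the Young-split inequality multiplied by $\tfrac{1-h}{4}$, yielding a noise coefficient $\tfrac{(2-h)\gamma_k}{2}$, which combines with the bound $\|\nabla_j f(x_k)-\nabla_j f(\hat{x}_k)\|^2\le \hat{L}h\sum_\ell W_\ell$ and the elementary inequality $(2-h)h\le 1$ to give exactly $p_k=\hat{L}\gamma_k/2$; so the $1/4$ is really $\tfrac{1-h}{2}\cdot\tfrac{1}{2}$ (one half from the descent-lemma slack, the other from the Young split), rather than a halving ``from the triangle-inequality lower bound'' as you phrased it. Your verification of \eqref{eq:generalprincipal} with $Q_k\equiv 1$ is correct and in fact more explicit than the paper's.
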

	By Theorem \ref{theo:sequence} and Lemma \ref{lemma:ARock}, Theorem \ref{theo:coor} is straightforward.
	
	\subsection{Proof of Lemma \ref{lemma:ARock}}
	
	If $\gamma_k = 0$, then $p_k=0$. Below we consider the case where $\gamma_k>0$ and prove \eqref{eq:asynseq} and \eqref{eq:generalprincipal}.
	
	Suppose that at the $k$th iteration, the $j'$-th block is updated. Define $u_k = \frac{x_k^{(j')}-x_{k+1}^{(j')}}{\gamma_k}-\nabla_{j'} f(\hat{x}_k)$. By the first-order optimality condition of \eqref{eq:asyncor},
	\begin{equation*}
		u_k\in \partial R^{(j')}(x_{k+1}^{(j')}),
	\end{equation*}
	which, together with the convexity of $R^{(j')}$, yields
	\begin{equation*}
		\begin{split}
			R(x_{k+1}) - R(x_k) &= R^{(j')}(x_{k+1}^{(j')})-R^{(j')}(x_k^{(j')})\\
			&\le \langle u_k, x_{k+1}^{(j')}-x_k^{(j')}\rangle.
		\end{split}
	\end{equation*}
	By the Lipschitz continuity in Assumption \ref{asm:sc},
	\begin{equation*}
		\begin{split}
			f(x_{k+1}) - f(x_k)\le & \langle \nabla_{j'}f(x_k),x_{k+1}^{(j')}-x_k^{(j')}\rangle\!+\!\frac{\hat{L}}{2}\|x_{k+1}^{(j')}\!-\!x_k^{(j')}\|^2.
		\end{split}
	\end{equation*}
	Adding two equations above gives
	\begin{equation}\label{eq:cor1}
		\begin{split}
			P(x_{k+1})-P(x_k)\le& \langle \nabla_{j'} f(x_k)\!+\!u_k, x_{k+1}^{(j')}\!-\!x_k^{(j')}\rangle\!\!+\!\!\frac{\hat{L}}{2}\|x_{k+1}^{(j')}\!-\!x_k^{(j')}\|^2.
		\end{split}
	\end{equation}
	In the above equation,
	\begin{equation}\label{eq:cor2}
		\begin{split}
			&\langle \nabla_{j'} f(x_k)+u_k, x_{k+1}^{(j')}-x_k^{(j')}\rangle\\
			=& \frac{\gamma_k}{2}\|\nabla_{j'} f(x_k)+u_k+\frac{x_{k+1}^{(j')}-x_k^{(j')}}{\gamma_k}\|^2-\frac{\gamma_k}{2}\|\nabla_{j'} f(x_k)\!+\!u_k\|^2\!-\!\frac{1}{2\gamma_k}\|x_{k+1}^{(j')}\!-\!x_k^{(j')}\|^2.
		\end{split}
	\end{equation}
	By the definition of $u_k$,
	\begin{equation}\label{eq:cor3}
		\begin{split}
			&   \|\nabla_{j'} f(x_k)+u_k+\frac{x_{k+1}^{({j'})}-x_k^{({j'})}}{\gamma_k}\|^2= \|\nabla_{j'} f(x_k)-\nabla_{j'} f(\hat{x}_k)\|^2.
		\end{split}
	\end{equation}
	Substituting \eqref{eq:cor2} and \eqref{eq:cor3} into \eqref{eq:cor1} gives
	\begin{equation}\label{eq:pxk1xkdiff}
		\begin{split}
			&P(x_{k+1})-P(x_k)\\
			\le&\frac{\gamma_k}{2}\|\nabla_{j'} f(x_k)\!-\!\nabla_{j'} f(\hat{x}_k)\|^2-\!\frac{\gamma_k}{2}\|\nabla_{j'} f(x_k)\!+\!u_k\|^2-\frac{1/\gamma_k-\hat{L}}{2}\|x_{k+1}^{(j')}\!-\!x_k^{(j')}\|^2.
		\end{split}
	\end{equation}
	In addition,
	\begin{equation}\label{eq:gammaknormsquare}
		\begin{split}
			&\gamma_k\|\frac{\operatorname{prox}_{\gamma_k R^{(j')}}(x_k^{(j')}-\gamma_k \nabla_{j'} f(x_k))-x_k^{(j')}}{\gamma_k}\|^2\\
			\le & \frac{2}{\gamma_k}\|\operatorname{prox}_{\gamma_k R^{(j')}}(x_k^{(j')}-\gamma_k \nabla_{j'} f(x_k))-x_{k+1}^{(j')}\|^2\\
			&+\frac{2}{\gamma_k}\|x_{k+1}^{(j')}-x_k^{(j')}\|^2\\
			\le &2\gamma_k\|\nabla_{j'} f(\hat{x}_k)\!-\!\nabla_{j'} f(x_k)\|^2\!+\!\frac{2}{\gamma_k}\|x_{k+1}^{(j')}\!-\!x_k^{(j')}\|^2,
		\end{split}
	\end{equation}
	where the last inequality comes from the non-expansive property of the proximal operator. Multiplying both sides of \eqref{eq:gammaknormsquare} by $\frac{1-h}{4}$ and adding the resulting equation with \eqref{eq:pxk1xkdiff}, we derive
	\begin{equation}\label{eq:cor4}
		\begin{split}
			&P(x_{k+1})-P(x_k)\le-\frac{\gamma_k(1-h)}{4}\\
			&\cdot\|\frac{\operatorname{prox}_{\gamma_k R^{(j)}}(x_k^{(j')}-\gamma_k \nabla_{j'} f(x_k))-x_k^{(j')}}{\gamma_k}\|^2\\ &+\frac{\gamma_k(2-h)}{2}\|\nabla_{j'} f(x_k)-\nabla_{j'} f(\hat{x}_k)\|^2\\
			&-\frac{h/\gamma_k - \hat{L}}{2}\|x_{k+1}^{(j')}\!-\!x_k^{(j')}\|^2.
		\end{split}
	\end{equation}
	Moreover, by \eqref{eq:inconsistentreadhatx}, $J_k\subseteq [k-\tau_k,k]$, Assumption \ref{asm:sc}, the step-size condition \eqref{eq:commonstepsizerule}, $\gamma'=\frac{h}{\hat{L}}$, and the Cauchy-Schwartz inequality,
	\begin{equation}\label{eq:gradiffcor}
		\begin{split}
			& \|\nabla_{j'} f(x_k)-\nabla_{j'} f(\hat{x}_k)\|^2\\
			=& \|\sum_{t\in J_k} \nabla_{j'} f(x_{t+1})-\nabla_{j'} f(x_t)\|^2\\
			\le& \|\sum_{t=k-\tau_k}^{k-1} \nabla_{j'} f(x_{t+1})-\nabla_{j'} f(x_t)\|^2\\
			\le& (\sum_{t=k-\tau_k}^{k-1}\|\nabla_{j'} f(x_{t+1})-\nabla_{j'} f(x_t)\|)^2\\
			=& (\sum_{t=k-\tau_k}^{k-1}\sqrt{\gamma_t}\frac{\|\nabla_{j'} f(x_{t+1})-\nabla_{j'} f(x_t)\|}{\sqrt{\gamma_t}})^2\\
			\le& (\sum_{t=k-\tau_k}^{k-1} \gamma_t)\sum_{t=k-\tau_k}^{k-1}\frac{\|\nabla_{j'} f(x_{t+1})-\nabla_{j'} f(x_t)\|^2}{\gamma_t}\\
			\le& \hat{L}h\sum_{t=k-\tau_k}^{k-1}\frac{\|x_{t+1}-x_t\|^2}{\gamma_t},
		\end{split}
	\end{equation}
	and according to Lemma 1 in \cite{karimi16}, because $\gamma_k\le \frac{1}{\hat{L}}$,
	\begin{equation}\label{eq:proximaldistance}
		\|\frac{\operatorname{prox}_{\gamma_k R}(x_k-\gamma_k \nabla f(x_k))-x_k}{\gamma_k}\|\ge \|\tilde{\nabla} P(x_k)\|.
	\end{equation}
	Substituting \eqref{eq:gradiffcor} and \eqref{eq:proximaldistance} into \eqref{eq:cor4} and using $(2-h)h\le 1$, we have
	\begin{equation*}
		\begin{split}
			&E[P(x_{k+1})|x_k]-P(x_k)\\
			\le&-\frac{\gamma_k(1-h)}{4m}\|\tilde{\nabla} P(x_k)\|^2+\frac{\hat{L}\gamma_k}{2}\sum_{t=k-\tau_k}^{k-1} W_t-\frac{h-\hat{L}\gamma_k}{2}W_k.
		\end{split}
	\end{equation*}
	Therefore, \eqref{eq:asynseq} holds.
	\section{Proof of Proposition \ref{prop:stepsizeintegralbound}}\label{sec:proofprop}
	
	\textbf{Proof of \eqref{eq:stepsizesummationPIAG}}:	To derive \eqref{eq:stepsizesummationPIAG} for adaptive step-size  \eqref{eq:PIAGadapt}, define $\{t_k\}_{k=0}^\infty$ as $t_0=0$,  $t_{k+1} = \min\{t:~t-\tau_t>t_k\}$ $\forall k\in\mathbb{N}_0$ and $N_k=1+\max\{j: t_j\le k\}$ $\forall k\in\mathbb{N}_0$. Because $\tau_j\le \tau$ $\forall j\in\N_0$, by the definition of $t_j$,
	\begin{equation*}
		t_{j+1}\le t_j+\tau+1,~\forall j\in\N_0.
	\end{equation*}
	In addition, $t_0=0$. Then, we have $t_j\le j(\tau+1)$, which implies $t_{\lfloor k/(\tau+1) \rfloor}\le k$. By definition of $N_k$,
	\begin{equation*}
		\begin{split}
			N_k&\ge 1+\max\{j: t_j\le k\}\\
			&\ge 1+\lfloor k/(\tau+1)\rfloor \ge \frac{k+1}{\tau+1}.
		\end{split}
	\end{equation*}
	Note that because $t_k+1\le t_{k+1}-\tau_{t_{k+1}}$,
	\begin{equation*}
		\sum_{j=t_k+1}^{t_{k+1}}\gamma_j\ge \sum_{j=t_{k+1}-\tau_{t_{k+1}}}^{t_{k+1}} \gamma_j.
	\end{equation*}
	In addition, by the definition of $N_k$, $t_{N_k-1}\le k$, which, together with the above equation, gives
	\begin{equation}\label{eq:stepsizeintegralinproof}
		\begin{split}
			\sum_{j=0}^k \gamma_j &\ge \sum_{j=0}^{t_{N_k-1}} \gamma_j = \gamma_0+\sum_{\ell=0}^{N_k-2} \sum_{j=t_{\ell}+1}^{t_{\ell+1}}\gamma_j\\
			&\ge\gamma_0+\sum_{\ell=0}^{N_k-2} \sum_{j=t_{\ell+1}-\tau_{t_{\ell+1}}}^{t_{\ell+1}} \gamma_j.				
		\end{split}
	\end{equation}
	If $\gamma_{t_{\ell+1}}=0$, then $a_{t_{\ell+1}}<\gamma' - \sum_{j=t_{\ell+1}-\tau_{t_{\ell+1}}}^{t_{\ell+1}-1} \gamma_j$, which implies
	\begin{equation*}
		\sum_{j=t_{\ell+1}-\tau_{t_{\ell+1}}}^{t_{\ell+1}-1} \gamma_j \ge \gamma'.
	\end{equation*}
	Otherwise, $\gamma_{t_{\ell+1}} = \alpha (\gamma'-\sum_{j=t_{\ell+1}-\tau_{t_{\ell+1}}}^{t_{\ell+1}-1} \gamma_j)$ and
	\begin{equation*}
		\sum_{j=t_{\ell+1}-\tau_{t_{\ell+1}}}^{t_{\ell+1}} \gamma_j = \alpha\gamma'+(1-\alpha)\sum_{j=t_{\ell+1}-\tau_{t_{\ell+1}}}^{t_{\ell+1}-1}\ge \alpha\gamma'.
	\end{equation*}
	From the above two equations, we have $\sum_{j=t_{\ell+1}-\tau_{\ell+1}}^{t_{\ell+1}} \gamma_j\ge \alpha\gamma'$. In addition, because $\tau_k\in [0,k]$ $\forall k\in\N_0$, $\tau_0=0$ and $\gamma_0=\alpha\gamma'$. Substituting these into \eqref{eq:stepsizeintegralinproof} yields \eqref{eq:stepsizesummationPIAG}.
	
	\textbf{Proof of \eqref{eq:stepsizeintegral2}}: We use mathematical induction to prove \eqref{eq:stepsizeintegral2} for adaptive step-size \eqref{eq:adapt2}. Suppose that the following equation holds at some $k\in \N_0$:
	\begin{equation}\label{eq:induction}
		\sum_{j=0}^\ell \gamma_j \ge \frac{\tau}{\tau+1}\cdot\frac{\gamma'(\ell+1)}{\tau+1}, \forall \ell\le k-1,
	\end{equation}
	which naturally holds when $k=0$. Below, we prove that \eqref{eq:induction} holds at $k+1$ by showing
	\begin{equation}\label{eq:inductionatkplus1}
		\sum_{j=0}^k \gamma_j \ge \frac{\tau}{\tau+1}\cdot\frac{\gamma'(k+1)}{\tau+1}.
	\end{equation}

	If $\gamma_k=0$, which is possible only when $\tau_k>0$, then by \eqref{eq:adapt2},
	\begin{equation*}
		\sum_{j=k-\tau_k}^{k-1} \gamma_j>\frac{\tau_k \gamma'}{\tau_k+1}\ge \frac{\tau}{\tau+1}\cdot\frac{(\tau_k+1) \gamma'}{\tau+1},
	\end{equation*}
	where the last step is due to $\frac{\tau_k}{(\tau_k+1)^2}\ge \frac{\tau}{(\tau+1)^2}$ when $\tau_k\in [1,\tau]$. In addition, because $k-\tau_k-1\le k-1$, by \eqref{eq:induction},
	\begin{equation*}
		\sum_{j=0}^{k-\tau_k-1} \gamma_j \ge \frac{\tau}{\tau+1}\cdot\frac{\gamma'(k-\tau_k)}{\tau+1}.
	\end{equation*}
	Adding the two equations above and using $\gamma_k=0$, we have \eqref{eq:inductionatkplus1}. If $\gamma_k>0$, then by \eqref{eq:adapt2}, $\gamma_k\ge \frac{\gamma'}{\tau_k+1}\ge \frac{\gamma'}{\tau+1}$. In addition, $\sum_{j=0}^{k-1} \gamma_j \ge \frac{\gamma'k}{\tau+1}$ by \eqref{eq:induction}. Then, we have \eqref{eq:inductionatkplus1}, which indicates \eqref{eq:induction} at $k+1$. Conclude all the above, \eqref{eq:induction} as well as \eqref{eq:stepsizeintegral2} holds for all $k\in \N_0$.

\bibliographystyle{unsrt}
\bibliography{references}

\end{document}